\newtheorem{theorem}{Theorem}[section]
\newcommand{\cmt}[2]{}
\begin{document}

\title{Robust Probabilistic Model Checking\\with Continuous Reward Domains}

\author{
\IEEEauthorblockN{Xiaotong Ji\IEEEauthorrefmark{1}, Hanchun Wang\IEEEauthorrefmark{2}, Antonio Filieri\IEEEauthorrefmark{1}, Ilenia Epifani\IEEEauthorrefmark{3}} 

\IEEEauthorblockA{\IEEEauthorrefmark{1}\textit{Department of Computing, Imperial College London}\\}

\IEEEauthorblockA{\IEEEauthorrefmark{2}\textit{Department of Mathematics, Imperial College London}\\}

\IEEEauthorblockA{\IEEEauthorrefmark{3}\textit{Dipartimento di Matematica, Politecnico di Milano}\\}
}

\maketitle
\begin{abstract} 
Probabilistic model checking traditionally verifies properties on the expected value of a measure of interest. This restriction may fail to capture the quality of service of a significant proportion of a system's runs, especially when the probability distribution of the measure of interest is poorly represented by its expected value due to heavy-tail behaviors or multiple modalities.
Recent works inspired by distributional reinforcement learning use discrete histograms to approximate integer reward distribution, but they struggle with continuous reward space and present challenges in balancing accuracy and scalability.

We propose a novel method for handling both continuous and discrete reward distributions in Discrete Time Markov Chains using moment matching with Erlang mixtures. 
By analytically deriving higher-order moments through Moment Generating Functions, our method approximates the reward distribution with theoretically bounded error while preserving the statistical properties of the true distribution.
This detailed distributional insight enables the formulation and robust model checking of quality properties based on the entire reward distribution function, rather than restricting to its expected value.
We include a theoretical foundation ensuring bounded approximation errors, along with an experimental evaluation demonstrating our method's accuracy and scalability in practical model-checking problems.

\end{abstract}

\section{Introduction}\label{secIntro}

Probabilistic model checking (PMC)~\cite{2016KatoenPMCLandscape} plays a critical role in many self-adaptation methods, allowing the detection of quantitative requirements violations and supporting the planning of appropriate adaptation actions~\cite{2022PMCForAutonomousSystems,surveyFormalMethodsAdaptive,CACMAadaptiveNeedsQuantitative,7083754}.

However, most PMC methods allow to reason only about the expected value of a system property of interest, which may neglect the variability in the distribution of possible outcomes. For example, even if the average execution time of the system is less than 100ms, a large number of runs may exceed such expected value. Not accounting for the variability in a measure of interest's distribution may provide a partial assessment of a system's performance, in turn limiting the awareness of the adaptation logic and the robustness of its decisions.

Recent advancements in PMC~\cite{baier2014probabilistic, elsayed2024distributional}, also inspired by related results in reinforcement learning~\cite{bellemare2017distributional,dabney2018distributional} have incorporated distributional perspectives in the analysis of models abstracted as Discrete-Time Markov Chains (DTMCs) and Markov Decision Process (MDPs) by approximating probability or reward distributions using histograms. However, the precision of histogram approximations is constrained by the setting of bins, with fewer bins leading to coarser representations. Furthermore, the resulting piecewise constant approximations fail to capture subtle features such as multimodality or skewness, and their applicability is restricted to integer reward scheme to ensure convergence. These methods also face scalability challenges, particularly in large or continuous reward spaces, where sparse data worsens the inaccuracy of the discretization process.

\begin{figure}[!htp]
\centering
\subfigure[UAV's Flight Process DTMC]{\includegraphics[width=0.42\textwidth]{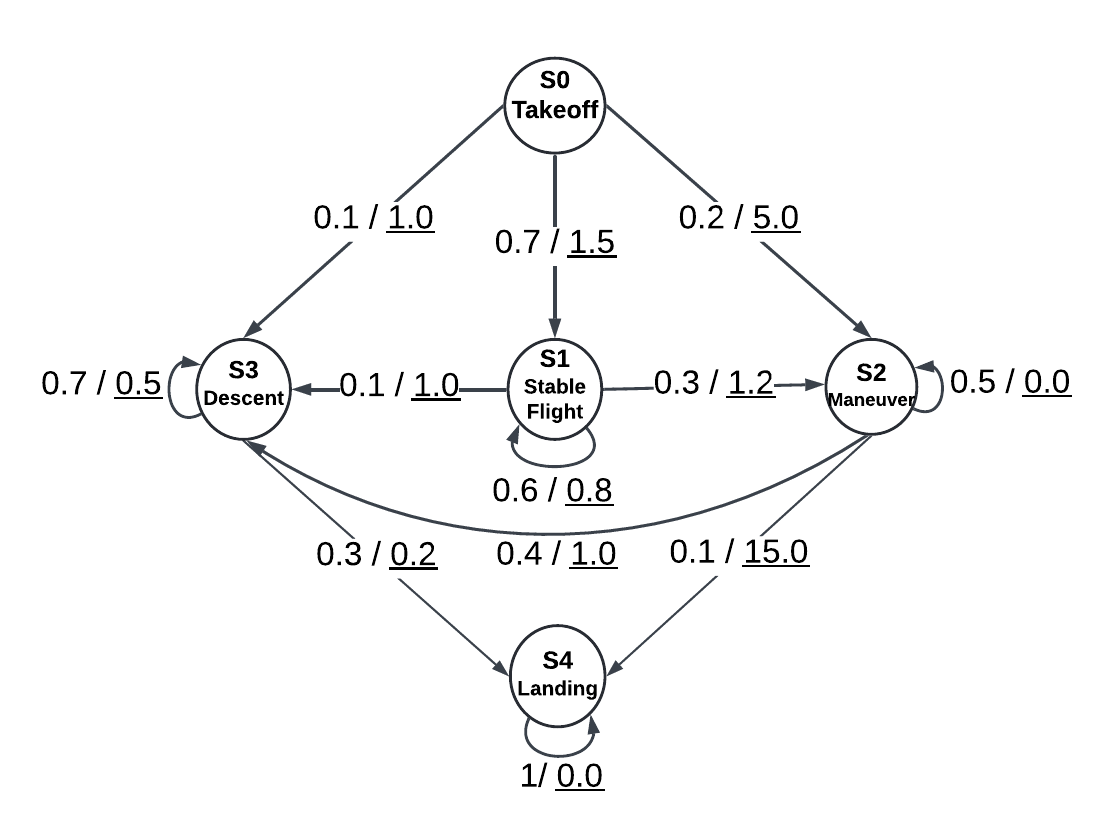}\label{figExampleDTMC}}
\subfigure[Cumulative reward Distribution from $s_0$]
{\includegraphics[width=0.42\textwidth]{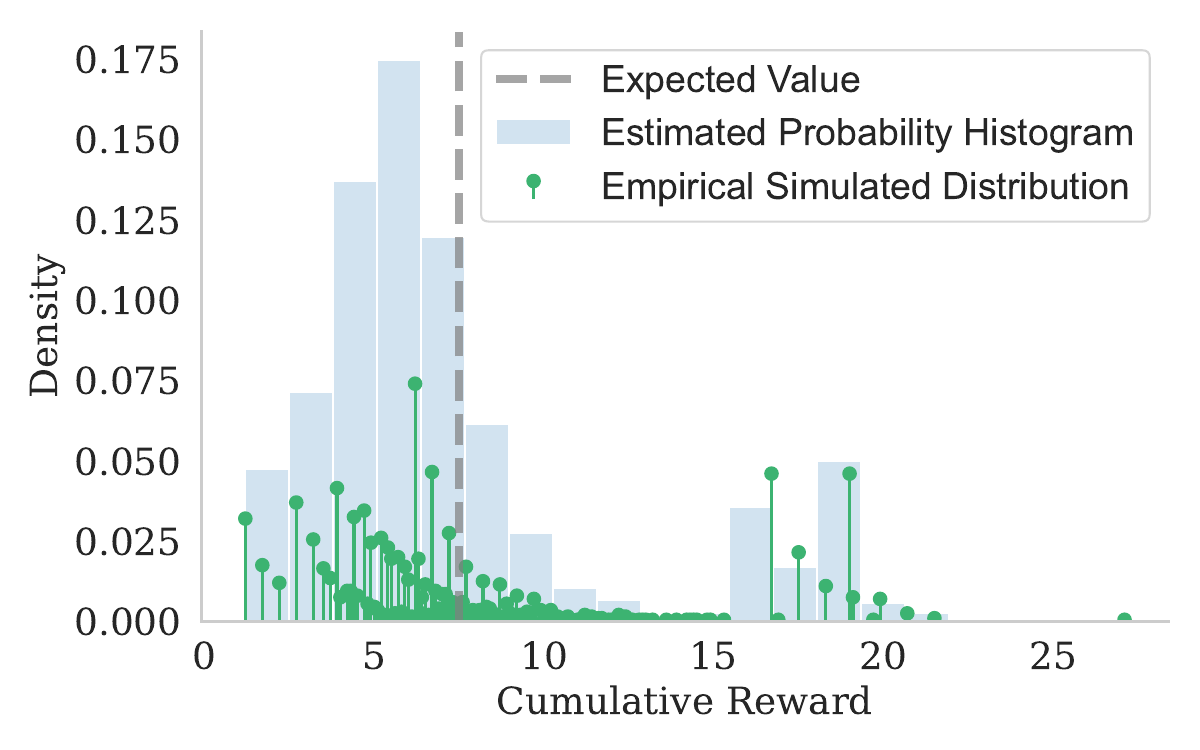}\label{figExampleDistribution}}
\caption{UAV flight process with states $\{s_0, s_1, s_2, s_3, s_4\}$ representing \texttt{Takeoff}, \texttt{Stable Flight}, \texttt{Maneuvering}, \texttt{Descent}, and \texttt{Landing}, respectively. Transition probabilities and rewards are shown on the edges in the form of $p$/$\underline{r}$. The cumulative reward distribution at $s_0$ is illustrated using the expected value (grey), probability histogram estimation (light blue), and true simulated distribution points (green), highlighting the different levels of information provided by various methods on the same distribution.}
\label{fig:example}
\end{figure}

\vspace{1mm}
\noindent\emph{Example.} Consider a UAV's flight process modeled as the DTMC shown in Figure~\ref{figExampleDTMC}, where each state corresponds to a specific flight phase -- Takeoff, Stable Flight, Maneuvering, Descent, and Landing. State transitions are governed by probabilities, while rewards represent energy consumption during these transitions. For example, transitioning from stable flight to maneuvering consumes more energy, whereas landing after descent requires minimal energy. This framework allows for evaluating the UAV's operational efficiency across various scenarios. Standard PMC only considers the expected value (the grey line in Figure~\ref{figExampleDistribution}) of the cumulative energy consumption, which provides limited information on the energy consumption pattern. In particular, it falls on the right-hand side of the head modality and far from the tail modality of the distribution. Recent categorical histogram-based methods~\cite{bellemare2017distributional} estimate the distribution of cumulative rewards, but they struggle to accurately capture the true distribution, especially in cases involving closely packed peaks in dense regions or isolated peaks in sparse regions. In dense regions, histograms tend to smooth out critical individual peaks, overestimating probabilities, while in sparse regions, they underestimate the significance of isolated high-reward transitions. Although quantile-based histogram methods~\cite{dabney2018distributional} allow for more flexible binning to address the dense-sparse issue, they fail to capture peak and tail behaviors effectively. These limitations highlight the need for more precise and flexible methods to approximate the true distribution of the measure of interest.
\vspace{1mm}

In this work, we propose a distribution approximation method using moment matching on Erlang mixtures to efficiently approximate the full distributional profile of probability and reward distribution over DTMCs, enabling verification of robust properties based on the complete distribution rather than expected values. The Erlang mixture is chosen for its strong capability to approximate all positive random variables with any arbitrary level of accuracy~\cite{schweitzer1996stochastic} and its natural connection to Markov models, facilitating a detailed analysis of Markov reward distributions to support the verification of quantitative requirements. The approximated distribution addresses the challenges of capturing multi-modality and precise behaviors faced by histogram approximations, without relying on iterative forward computations or simulations and supporting continuous reward spaces without requiring discretization, while preserving statistical properties, such as skewness and kurtosis, through precise moment matching. Our approach enhances the robustness of PMC by capturing a broader spectrum of possible outcomes, thus improving system behavior verification under uncertainty. 
Our key contributions are as follows:

\begin{enumerate}
    \item \textbf{Moment Matching Distribution Approximation}: We introduce a novel approach to approximate the cumulative reward distribution of stochastic systems using mixtures of Erlang distributions matched to its moments, providing a precise characterization of the full reward distribution with bounded error.
    
    \item \textbf{Robust Property Model Checking}: We introduce a procedure to verify \emph{chance requirements} of the form $Pr(R \leq r^{*}) \geq \alpha$ on a measure of interest, which predicate on the probability of the threshold $r^*$ being violated by a run of the system, rather than only constraining the measure's expected value.

\end{enumerate}

 \section{Background}\label{secBackground}

\subsection{Probabilistic Model Checking}
Probabilistic model checking is a formal method to reason about quantitative properties of systems, whose behavior is abstracted via convenient stochastic models~\cite{2016KatoenPMCLandscape}.
In this work, we focus on discrete Markov models and on the verification of regular properties~\cite{Baier2009MDPVerification}. Regular properties can be reduced to predicates on the probability of reaching a target state or on the cumulative reward until reaching a target state. For example, the co-safe fragment of linear time logic (LTL) is used to define sequential plans for a robot~\cite{pappas-cosafe-ltl} or probabilistic computation tree logic (PCTL) properties without nesting of the modal operators~\cite{Baier2009MDPVerification}. Without loss of generality, we will focus on properties of the forms $P_{\bowtie p}[ \diamond \phi]$ and $R_{\bowtie r}[ \diamond \phi]$, where $\bowtie \in \{\leq, \geq\}$, and $p \in [0, 1]$ and reward $r \in [0, \infty)$ represent bounds on the probability of eventually reaching a state where the Boolean predicate $\phi$ holds. 

In its standard semantics~\cite{baier2008principles}, a formula $P_{\bowtie p}[ \diamond \phi]$ holds in a state $s$ if the \textit{expected probability} of eventually reaching a state satisfying $\phi$ is $\bowtie p$. Similarly, $R_{\bowtie r}[ \diamond \phi]$ is evaluated by constraining the expected cumulative reward until a state satisfying $\phi$ is reached. This formulation takes advantage of the linearity of the expected value to reduce the probabilistic model checking problem to the solution of a linear system of equations for DTMCs, while for MDPs~\cite{puterman1990markov} to efficient dynamic programming using the standard Bellman operator~\cite{Baier2009MDPVerification}. 

However, the expected value may not tell the full story about the system being analyzed~\cite{elsayed2024distributional}. 
For example, the dispersion of the distribution, its skewness, or the presence of multiple modalities may render the expected value a poor representative of the system's behavior. 

\vspace{1.5mm}
\noindent\emph{Example.} For the UAV system in Figure~\ref{fig:example}, the expected cumulative reward is 7.51, which is rarely achieved across runs and does not capture the bi-modal behavior of the system.

\vspace{1mm}
\noindent\textbf{Chance constraints.} 
In this work, we aim to reinterpreting $P_{\bowtie p}(\cdot)$ and $R_{\bowtie r}(\cdot)$ as evaluations of the cumulative distribution function for the reachability and reward distributions. This enables verifying requirements of the form ``a reward larger than $r$ is obtained with at least probability $c$''. Notice that 1) reasoning about the probability of reaching a target state (i.e., a state satisfying a desired property $\phi$) can be reduced to reasoning about the cumulative rewards to reaching a target state via systematic model transformations~\cite{2011MultiObjectivePMC} and, similarly, 2) the paths reaching a target state can be reduced to paths to absorption into a designated absorbing state, again via the identification of connected components and systematic model transformations~\cite{Baier2009MDPVerification}. Therefore, to simplify the presentation and without loss of generality, the remainder of this paper will focus on approximating and reasoning about the probability distribution of the cumulative reward to absorption. More precisely, we adapt the definition of chance-constrained requirements from the robust control literature~\cite{2017ChanceConstraints} and propose an approximate decision procedure, with bounded error, for requirements of the form 
$Pr(X \leq r^*) \geq \alpha$
, where $X$ represents the distribution of cumulative rewards to absorption over the possible runs of the Markov process, $r^*$ is a threshold value, and $\alpha$ is the required probability level. Notably, chance-constrained requirements are also more general than the commonly used (Conditional)Value at Risk requirements used in previous literature~\cite{elsayed2024distributional}.

\subsection{Markov Reward Process}\label{secMarkovRewardProcess}

\noindent\textbf{Discrete-Time Markov Chain (DTMC).} 
A Discrete-Time Markov Chain is defined as a tuple $D = (S, s_0, P, r, AP, L)$, where $S$ is a finite state space, $s_0 \in S$ is the initial state, $P\colon  S \times S \to [0,1]$ is the stochastic matrix representing the transition probabilities (such that $p_{ij} \geq 0$ for all $p_{ij} \in P$ and $\sum_j p_{ij} = 1$ for all $i$), and $r\colon  S \to \mathbb{R}_{\geq 0}$ is a reward function that assigns a reward to each state\footnote{transition rewards are sometimes used to make model representations more compact (e.g., Fig.~\ref{fig:example}) but they can be reduced to state rewards w.l.o.g. by systematically replacing a transition $(s, p/r, s^\prime)$ with two transitions $(s, p, q)$ and $(q, 1, s^\prime)$ where $q$ is a fresh state with associated state reward $r$.}, $AP$ is a set of atomic propositions and $L\colon S \to 2^{AP}$ is a labeling function mapping each state to a set of propositions that hold true in that state. This stochastic system evolves as a sequence of random variables $\{X_n\}_{n \ge 0}$, where each $X_n$ represents the state of the system at the $n$-th time step. 

\noindent Notably, a specific \emph{induced DTMC} $M_D = (S_D, s_0, P_D, r, AP,$ $L)$ can be constructed by a Markov Decision Process (MDP) $M$ and a given policy $\pi$, where $S_D$ is a sub-space of the finite state space of the original MDP, $s_0 \in S_D$ is the initial state,  $P_D\colon S \times A \times S \to [0,1]$ is the stochastic transition matrix and $r\colon  (S, A) \to \mathbb{R}_{\geq 0}$ is a reward function. The policy $\pi\colon  S \times A \to \{0, 1\}$ is a specific action $a$, $\pi(a'|s) = 1$ and $\pi(a'|s) = 0$ for all $a' \ne a$. $M_D$ is fully probabilistic with no open action choices.

\vspace{0.5mm}
\noindent\textbf{Reward Process.} Consider a non-negative reward function \( r\colon S \to \mathbb{R}_{\geq 0} \), representing the instantaneous reward received in each state, together with a sequence of random variable $\{X_n\}_{n \ge 0}$. The total cumulative reward up to the time $N$ over the times $0, \dots, N$ is $R(N) = \sum_{n=0}^{N} r(X_n)$. As the expectation of probabilities and rewards in Markov Models can be computed as the solutions of linear equations~\cite{cinlar2013introduction}, the expected total cumulative reward $\mu = \mathbb{E}[ \sum_{n=0}^{T} r(X_n)]$ can be easily computed by solving the linear system $\mu = r + P\mu$ if the system is transient. Furthermore, by utilizing the Moment Generating Function (MGF) of total reward to absorption, we can compute the $k$-th moments of total reward for higher moment analysis.

\vspace{1mm}
\noindent\textbf{Moment-Generating Function (MGF).}  Consider a set of transient states \( C \subseteq S \) and absorbing states \( C^c \subseteq S \) (where $C \cap C^c = \emptyset$, $C \cup C^c = S$), with \( T = \inf \{ n \geq 0\colon X_n \in C^c \} \) denoting the first hitting time of a state in \( C^c \). The total cumulative reward up to absorption is represented by \( R_T = \sum_{n=0}^{T-1} r(X_n) \). The  MGF of \( R_T \) starting from $X_0=x$, denoted by \( M_{R_T}(\theta, x) \), is given by:
\begin{equation}
    M_{R_T}(\theta,x) =\mathbb{E}_x[e^{\theta R_T}] = \mathbb{E}_x\left[\exp\left(\theta \sum_{n=0}^{T-1} r(X_n)\right)\right]    
\end{equation}
Using first-step analysis, this MGF $M_{R_T}$ satisfies the linear system:
$$
M_{R_T}(\theta) = f(\theta) + G(\theta) M(\theta),
$$
where \( f(\theta, x) = \sum_{y \in C^c} e^{\theta r(x) } P(x, y) \) for \( x \in C \), and $ G(\theta, x, $ $ y) = e^{\theta r(x)} P(x, y) $ for \( x, y \in C \). 
The $k$-th moment $u_{k}(x)=\mathbb{E}_x\left[R_T^k\right]$ starting at $X_0=x$ can be obtained by differentiating the MGF $u_k=\left.\frac{d^k}{d \theta^k} M_{R_T}(\theta)\right|_{\theta=0}$ with the following recurrence:

\begin{equation}\label{eqMGF}
    \begin{split}
    u_{k}(x) &= \sum_{y \in S} P(x, y) \mathbb{E}_y \left[ \sum_{n=0}^{T-1} r(X_n) \right]^k \\
&= r(x)^k+\sum_{i=0}^{k-1}\binom{k}{i} r^{k-i}(x) \sum_{y \in S} P(x, y) u_i(y)
\end{split}
\end{equation}

\vspace{1.5mm}
\noindent\emph{Example.} The first moment for our UAV computed with Equation~\eqref{eqMGF} corresponds to the expected reward computed, e.g., by the PRISM model checker~\cite{kwiatkowska2002prism}: $\mu = 7.51$. However, the standard deviation (square root of the second moment) is $\sigma = 4.91$. Using, for example, Cantelli's inequality~\cite{CantelliInequality} (as known as one side Chebysehv's inequality~\cite{saw1984chebyshev}), this implies only $50\%$ of the trajectories are guaranteed to fall within the interval $[0, \mu + \sigma]$, leaving a significant portion of distribution not represented.

This result indicates that the expected value alone provides limited information, highlighting further refinement for accurate distribution approximations and robust model checking, particularly in risk-sensitive systems, to ensure reliable outcomes. We introduce next the base distribution we will use for such approximations.

\vspace{1mm}
\noindent \textbf{Erlang Distribution.}  
The Erlang distribution is a special case of both phase-type and Gamma distributions with integer shape parameters~\cite{erlang1909theory}. An Erlang distribution with shape parameter $a \in \mathbb{Z^+}$ and rate parameter $\lambda \in \mathbb{R}^+$ is defined by the probability density function:
\begin{equation}
    f_X(x; a, \lambda) = \frac{\lambda^a x^{a-1} e^{-\lambda x}}{(a-1)!}
\end{equation}

The Erlang cumulative distribution function is given by:

\begin{equation}\label{eqErlangMixCDF}
F_X(x; a, \lambda) = 1 - \sum_{j=0}^{a-1} \frac{(\lambda x)^j e^{-\lambda x}}{j!}
\end{equation}

To approximate the cumulative reward to absorption distribution, we will employ a mixture of Erlang distributions, following its inherent connection with Markov models~\cite{10.1214/21-BA1272} and its strong ability to closely approximate the distributions of any random variable in continuous settings and discrete settings with very fine granularity~\cite{doi:10.1080/15326349208807217,schweitzer1996stochastic,he2022continuous}. 
In particular, we will construct a mixture of Erlang distributions with a common rate parameter, which reduces the cost of moment matching without loss of approximation accuracy~\cite{chakravarthy1996matrix}. Such functional form allows to recover the full distributional behavior of the reward process with any arbitrary accuracy level, matching the first $K$ moments for any $K>0$.

 \algrenewcommand\algorithmicrequire{\textbf{Input:}}
\algrenewcommand\algorithmicensure{\textbf{Output:}}
\section{Robust Probabilistic Model Checking}\label{secMethod}
We propose a robust PMC method for DTMCs by approximating the cumulative reward distribution towards absorption through moment matching with mixtures of Erlang distributions. The approximation is formulated as a truncated Stieltjes moment problem~\cite{gavriliadis2012truncated}, to reconstruct the density function from a finite number of its moments on the semi-infinite interval $[0, +\infty)$. This approach specifically focuses on preserving the statistical properties of the distribution while addressing the challenge of having only partial information in the first $K$ moments, rather than the full infinite moment sequence. 
This approach avoids the complexities of discretization and iterative updates of histogram-based methods limited to discrete rewards~\cite{elsayed2024distributional}, thus avoiding potential issues of binning resolution trade-off and state-space explosion. Instead, our method focuses on refining the approximation based on a set of invariant statistical characteristics to approximate the density function with essential statistical information directly.

\subsection{Markov Reward Distribution Approximation}\label{secDistributionApproximation}

\noindent\textbf{Problem Formulation.} Our objective is to approximate the distribution of the cumulative reward towards absorption $R(N)$ in a DTMC \(D\), with initial state \(s_0\) and transition matrix \(P\). We define the approximating density \(f_{\text{approx}}(x)\) of the approximate reward distribution $X$ as a mixture of $n$ Erlang densities:
\begin{equation}
    f_{\text{approx}}(x) = \sum_{i=1}^n \omega_i f_{\text{Erlang}}(x; a_i, \lambda),
\end{equation}
\noindent where \( \omega_i \geq 0 \) is the non-negative mixture weight of the $i$-th Erlang density, \( a_i \in \mathbb{Z}^+\) is the shape parameter of the $i$-th Erlang density, and \( \lambda \in \mathbb{R}^+ \) is the common rate parameter.

We formalize the approximation process as solving a truncated Stieltjes moment problem~\cite{gavriliadis2012truncated} on the semi-infinite interval $[0, +\infty)$ with the first $K$-th moments of the random variable $X$. The parameters \( \omega_i, a_i, \lambda \) are determined by solving an optimization problem that minimizes the difference between the first $K$-th moments $\mu_1, \ldots\mu_K$ of the true reward distribution (cf. Sec.~\ref{secMarkovRewardProcess}) and the moments $\hat{\mu}_1, \ldots, \hat{\mu}_K$ of the approximate distribution. We further incorporate the maximum entropy principle within our moment matching formulation~\cite{mead1984maximum, ganapathi2012constrained}, to ensure that among all possible approximate distributions matching the given moments, the one with the least bias (i.e., the one maximizing entropy) is selected. 

The objective function of the optimization problem is defined as:
\begin{equation}
\begin{aligned}
\min_{\omega_i, a_i, \lambda} \quad & \mathcal{L} = \sum_{k=1}^K \left( \mu_k - \hat{\mu}_k \right)^2 - \gamma H(f_{\text{approx}}), \\
\end{aligned}
\end{equation}

\noindent where \( \gamma \geq 0 \) is a weighting parameter controlling the trade-off between moment matching and entropy maximization, 

\noindent\( H(f_{\text{approx}}) = -\int_{0}^{\infty} f_{\text{approx}}(x) \log f_{\text{approx}}(x) \, dx \) is the differential entropy of the approximated distribution, the moments $\mu_k$ of the true reward distribution are computed with the MGF in Equation~\eqref{eqMGF}, and the moments of the mixture of Erlang distributions is $\hat{\mu}_k = \sum_{i=1}^n \omega_i \frac{(a_i + k - 1)!}{(a_i - 1)!} \frac{1}{\lambda^k}$.

\begin{algorithm*}[t]
\caption{Markov Reward Distribution Approximation}
\label{alg:alg1}
\begin{algorithmic}[1]
\Require DTMC \(D\) (or induced DTMC \(M_D\)) $= (S, s_0, P, r, AP, L)$, error bound \(\epsilon > 0\), the number $K$ of moments to be matched, and number $n$ of Erlang distributions to mix
\Ensure Approximated mixture of \(n\) Erlang distributions \(f_{\text{approx}}(x)\)

\State $f = \sum_{y \in T} e^{k r(x)} P(x, y)$ $x \in S$ and $G = e^{k r(x)} P(x, y)$ $x, y \in S$\label{lnMoments1}
\State $\mu_k = r^k f + \sum_{i=0}^{k} \binom{k}{i} G^{(k-i)}$ \quad $\forall k \in \{1, \dots, K\}$\label{lnMoments2}
\Comment{Compute analytical moments with Eq.~\eqref{eqMGF}}

\State Initialize \( \mathcal{L}=\infty \) and \(\omega_i, a_i, \lambda \) \quad \(\forall i \in \{1, \dots, n\} \)

\Repeat\label{lnOptimizationLoop}
    \State $\mathcal{L}_{\text{prev}}$ = $\mathcal{L}$
    \State $\hat{\mu}_k = \sum_{i=1}^{n} \omega_i \frac{(a_i + k - 1)!}{(a_i - 1)!} \frac{1}{\lambda^k}$ \quad $\forall k \in \{1, \dots, K\}$\label{lnPolyForm}
    \Comment{Compute approximate moments}
    
    \State $H(f_{\text{approx}}) = - \int_{0}^{\infty} f_{\text{approx}}(x) \log f_{\text{approx}}(x) \, dx$
    \Comment{Calculate the differential entropy}
    
    \State $ \min_{\omega_i, a_i, \lambda} \mathcal{L} = \sum_{k=1}^{K} \left( \mu_k - \hat{\mu}_k \right)^2 - \gamma H(f_{\text{approx}})$\label{lnOptimizationProblem}
    \\\quad\quad\quad subject to $\sum_{i=1}^n \omega_i = 1, \quad \omega_i \geq 0, \quad a_i \in \mathbb{N}^+, \quad \lambda > 0$

    \State $\omega_i, a_i, \lambda \gets \arg\min_{\omega_i, a_i, \lambda} \mathcal{L}$
    \Comment{Update parameters}

\Until \( \left| \mathcal{L}_{\text{prev}} - \mathcal{L} \right| < \epsilon \)

\State \Return \( f_{\text{approx}}(x) = \sum_{i=1}^n \omega_i f_{\text{Erlang}}(x; a_i, \lambda) \)

\end{algorithmic}
\end{algorithm*}
\vspace{1mm}
\noindent \textbf{Main Algorithm.} The approximation algorithm, detailed in Algorithm~\ref{alg:alg1}, takes as input a DTMC \(D\) along with an arbitrarily small error bound \(\epsilon > 0\), the number $K$ of moments to be matched, and the size of the mixture $n$, and it returns the approximate distribution. For a given number $K$ of moments, the mixture of Erlang distributions can match them using approximately $\lfloor K/2 \rfloor + 1$ components, reducing the mixture size while preserving accuracy~\cite{johnson1989matching}. We first compute the true target moments $\mu_k$, providing the necessary statistical information using Equation~\eqref{eqMGF} (lines~\ref{lnMoments1}-\ref{lnMoments2}). Before the optimization phase, we initialize the mixture weights $\omega_i$, shapes $a_i$, and rate parameter $\lambda$ (detailed discussion on initialization and hyper-parameters in Section~\ref{SecEval}).

At each optimisation step (in the loop at line~\ref{lnOptimizationLoop}), we compute the first \(K\) moments of \(f_{\texttt{approx}}\) and the differential entropy \(H(f_{\texttt{approx}})\). The objective function \(L\) balances moment matching with entropy maximization, controlled by the weighting parameter \(\gamma\). The parameters \(\omega_i\), \(a_i\), and \(\lambda\) are updated to minimize \(L\) subject to the constraints, until the convergence criteria are satisfied. The algorithm then returns the approximated distribution with the optimized parameters for further analysis via robust model checking. To improve the numerical stability of moment matching, we further utilize standardized moments $\bar{\mu}_{k}$ in the optimization process 
$\bar{\mu}_{k}= \mu_{k} / c$, with $c=\sqrt{\mu_{2}}$, for $k=3, 4, \dots, K$.

Algorithm~\ref{alg:alg1} produces an arbitrarily close approximation $f_{\texttt{approx}}$ that closely matches the cumulative reward distribution associated with discrete Markov models. Arbitrary accuracy can be obtained by increasing the number of moments to be matched and the size of the mixture~\cite{gavriliadis2012truncated,schweitzer1996stochastic}. The inclusion of entropy maximization ensures that, among all distributions matching the given moments, the algorithm selects the most unbiased one. This feature is particularly advantageous for robust probabilistic model checking to avoid overfitting on a small number of moments~\cite{mead1984maximum}.

\vspace{1mm}
\noindent\textbf{Complexity Observations.}
The main algorithm starts with a preliminary phase which computes the first $K$ moments of the reward distribution based on the reward scheme and transition matrix of the DTMC. Each moment $\mu_k$ is computed by solving a system of linear equations involving the lower-order moments up to $\mu_{k-1}$. Therefore, to compute the first $K$ moments, it is sufficient to solve $K$ systems of linear equations. The size $\xi$ of each system of equations corresponds to the size of the transition matrix $P$, i.e., the number of states in the model. A worst-case complexity for solving a system of linear equations is $\mathcal{O}(\xi^3)$, although standard numerical solvers (e.g., Numpy~\cite{harris2020array}) usually improve over such a bound and can exploit the sparsity and possible symmetries in the transition matrix for more efficient computations. 

In the optimization loop, we separate the computation of the differential entropy of $f_{\texttt{approx}}$ from the main optimization problem. While this step may increase the number of iterations to convergence, it reduces the complexity of the inner optimization problem at Line~\ref{lnOptimizationProblem}. This optimization problem is a mixed-integer non-linear form (the parameters $a_i$ being positive integers), which places it in the NP class, which may be computationally intractable. 

A common heuristic adopted in the literature to reduce the complexity of optimization is to fix a set of possible values for the shape parameters $a_i$~\cite{Verbelen_Gong_Antonio_Badescu_Lin_2015,Lee_Lin_2012} while the optimizer identifies the optimal rate parameter and the weights of the mixture. While setting each $a_i=i$, for $i=1, 2, 3, \dots$ ensures an optimal approximation by relying on the density of Erlang mixtures~\cite[Theorem 2.9.1]{schweitzer1996stochastic}, it may also require a large number of components to approximate the underlying distribution, especially in presence of heavy tails, sharp drops, or multiple modalities. Furthermore, using a large mixture to fit a small number of moments may lead to overfitting. A common recommendation when fitting a finite number of moments $K$ is instead to spread the shape parameters~\cite[Sec 5.4.3]{verbelen2013phase} fed to the optimizer. We found in our experiments that an exponential spreading (i.e., $a_i=c^i$) of the fixed shapes yielded the best results, compared to a dense allocation (i.e., $a_i=i$) and the linear spreading (i.e., $a_i=c\cdot i$ for some $c>1$) used in~\cite{verbelen2013phase}. Fixing the shape parameters reduces the optimization problem to a non-convex quadratic form~\cite{2012MIQProgramming} that state-of-the-art optimizers -- such as Gurobi~\cite{gurobi} or SciPy~\cite{virtanen2020scipy} -- can solve quickly, as demonstrated experimentally in Section~\ref{SecEval} where we also report on the accuracy and computation time required for different spreading strategies and number of components.

Finally, in the next section we will describe how probability inequalities can be used to draw sufficient decisions on chance-constrained requirements directly from the moments computed during the preliminary phase of Algorithm~\ref{alg:alg1}, thus avoiding the need to execute the optimization loop entirely.

\subsection{Distributional Analysis for Robust Model Checking}

\noindent \textbf{Robust Property Evaluation.} The evolution of the random variable \(X\) can be treated as the outcome cumulative reward of the underlying stochastic system, with each transition governed by probabilistic dynamics. By considering the density \(f_{\texttt{approx}}\) of \(X\), we formalize a robust property evaluation for chance-constrained requirements~\cite{2017ChanceConstraints} of the form:
\begin{equation}\label{eqChanceConstrained}
Pr(X \leq r^*) \geq \alpha,    
\end{equation}
\noindent where \(r^*\) is a critical threshold representing a performance or safety metric, and \(\alpha\) is the required probability level. 
For example, in a safety-critical system, we can verify that the probability of a performance violation remains below a predefined risk level \(\alpha\), rather than asking whether just the expected reward is below the threshold. 
Notice that constraints of the form $X \geq r^*$ or, more in general, $r_* \leq X \leq r^*$, can be evaluated with the same method using the standard properties of the cumulative distribution function.

Almost sure properties, where the probability of satisfying a temporal formula is required to be 1 (or 0), can also be straightforwardly phrased as chance-constrained requirements, whereas chance-constrained formulations allow for the controlled relaxation of the requirements. This makes our distributional method well suited for the verification of probabilistic safety requirements~\cite{2016KatoenPMCLandscape}, as well as for the robustness analysis of the policies synthesized for MDPs via reinforcement learning~\cite{gross2022cool}, by reasoning on the induced DTMC model~\cite[Ch. 10]{baier2008principles}.

From $f_{\texttt{approx}}$, the evaluation of the chance-constrained requirement in Equation~\eqref{eqChanceConstrained} reduces to computing the cumulative distribution function of the finite Erlang mixture $X$:
\begin{equation}\label{eqMixtureCDF}
    Pr(X \leq r^*) = \sum_{\omega_i} \omega_i F_{X_i}(r^*; a_i, \lambda_i)=: F_{\texttt{approx}} (r^*) 
\end{equation}

\noindent where the cumulative distribution functions $F_{X_i}(x; a_i, \lambda_i)$ are defined for each Erlang component of the mixture as in Equation~\eqref{eqErlangMixCDF}.

\vspace{1mm}
\noindent \textbf{Efficient Sufficient Conditions.} 
After the moments of the rewards distribution are computed in Algorithm~\ref{alg:alg1} (lines~\ref{lnMoments1} and ~\ref{lnMoments2}), a decision on the validity of a chance-requirement may already be possible, without computing $f_{\texttt{approx}}$, by relying on probabilistic inequality derived from Markov's theorem~\cite{CantelliInequality}. Such inequalities can bound the probability of a random variable exceeding a threshold by computing only with a finite number of moments of the distribution. Being agnostic to the actual shape of the distribution, such inequalities tend to be over-conservative, as we will show later.

Nevertheless, probability inequalities, when they confirm the satisfaction of the chance-constrained requirements, can be used as an early termination criterion for Algorithm~\ref{alg:alg1}, allowing to skip the optimization loop entirely. Under our assumption of non-negative rewards, the sharpest inequality derived from Markov's theorem is Cantelli's~\cite {CantelliInequality} (also referred to as one-sided Chebyshev's inequality). Its formulation generalized to higher moments is:
\begin{equation}\label{eqCantelliHigherMoments}
    Pr(X - E[X] \geq a) \leq E \Biggl[ \Biggl(\frac{X + b}{a + b}\Biggr)^n \Biggr] = \frac{E[ (X + b)^n ]}{(a + b)^n}
\end{equation}

\noindent where $E[ (X + b)^n ]$ is the $n$-th moment of $X$ centered in $-b$. When $n=2$, Equation~\eqref{eqCantelliHigherMoments} reduces to the original formulation of Cantelli's inequality by choosing $b=\sigma^2/a$, $\sigma$ being the standard deviation of $X$ (the low-tail correspondent, can be obtained by applying the inequality to $-X$, thus flipping the constraint to a $\leq$ form).

After computing the first $K$ moments of the reward distribution in the initialization phase of Algorithm~\ref{alg:alg1}, Equation~\eqref{eqCantelliHigherMoments} can provide a fast and robust decision on the probabilistic model checking problem by comparing its right-hand side with $1-\alpha$. If the chance constraint holds, the decision procedure terminates. Otherwise, it is still possible that Cantelli's inequality is not satisfied due to its conservativeness -- the inequality must hold for \emph{any} distribution. In such a case, reconstructing $f_{\texttt{approx}}$ is required to obtain a more accurate answer.

\vspace{3mm}
\noindent\emph{Example.} 
Figure~\ref{fig:cantelli} shows the cumulative distribution for our UAV running example. The solid green line shows the CDF $F_{\texttt{approx}}$ (integral of $f_{\texttt{approx}}$) estimated with our method by matching the first three moments to construct a mixture of three Erlang distributions. The shaded bars show the empirical distribution drawn from one million simulations of the system, which we use to approximate the ground truth. The dashed vertical lines show the probability bounds computed using Cantelli's inequality for the chance requirement of Equation~\eqref{eqChanceConstrained}, with values of $r^*$ being on the x-axis and each dashed line showing a different probability threshold $\alpha$.
Consider the requirement $Pr(X \leq 15) \geq 0.65$. Cantelli's inequality ($n=3$) ensures that $Pr(X \leq 15) \geq 0.68$, therefore the requirement holds and there is no need to compute $f_{\texttt{approx}}$ (optimization loop in Algorithm~\ref{alg:alg1}) to draw a conclusion. If we were instead to verify $Pr(X \leq 15) \geq 0.8$, due to its conservative nature, Cantelli's bound would not be able to prove that the system indeed satisfies the requirement, as correctly concluded by evaluating $F_{\texttt{approx}}(15)$.

\begin{figure}[!htp]
\centering
\includegraphics[width=0.48\textwidth]{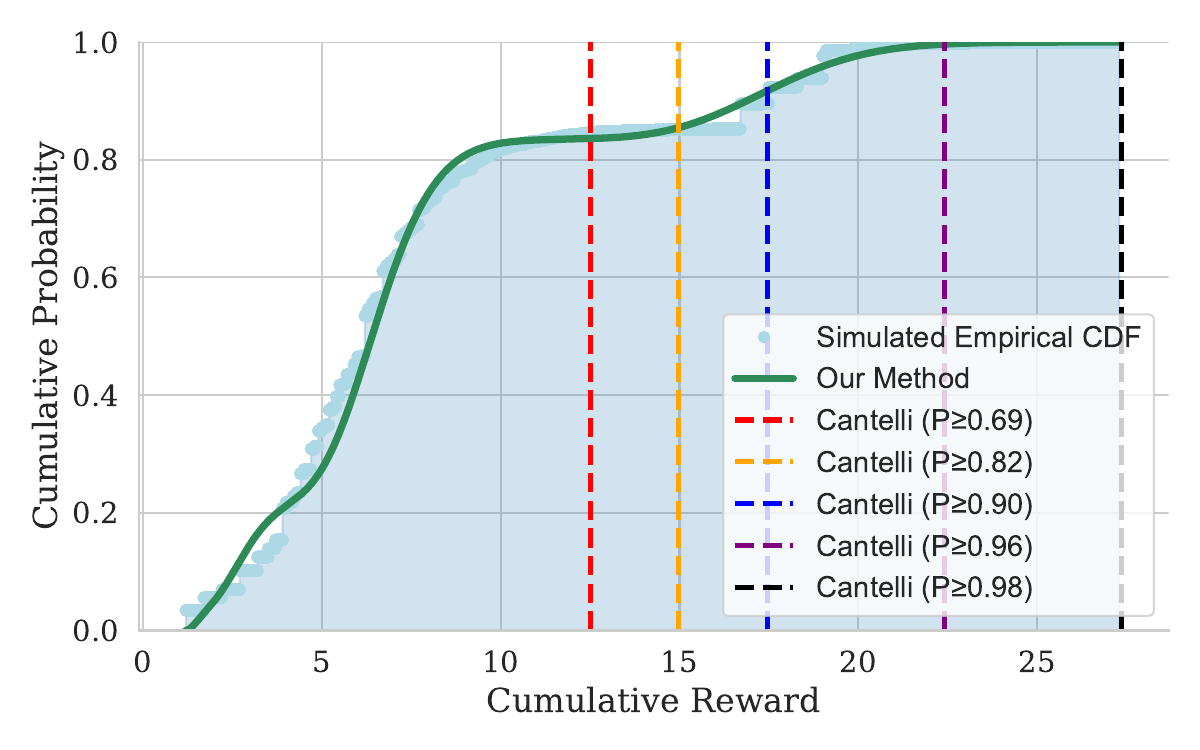}
\caption{Cantelli Bound's Coarseness in Model Checking
}
\label{fig:cantelli}
\end{figure}

\noindent The moment matching process ensures that the approximation closely represents the true stochastic behavior of the system using only the first $K$ moments \cite{gavriliadis2012truncated}. The use of multiple moments, captures both central tendencies and tail behaviors, which are essential for assessing system robustness. The next section includes the main theoretical foundation underpinning our choice of approximants and moment-matching, while in Section~\ref{SecEval}, we assess experimentally the quality of the approximants with different numbers of moments and mixture size, as well as the computation time required to construct them.

 \section{Theoretical Analysis}\label{secTheo}

In this section, we demonstrate that it is acceptable and effective to approximate the cumulative reward distribution of a DTMC using mixtures of Erlang distributions. We use Erlangization to obtain a continuous approximation which is later approximated by a mixture of Erlang distribution on a modified DTMC with discretized reward. By applying these theorems, we justify the use of mixtures of Erlang distributions as a valid method for approximating the cumulative reward distribution and its exponential tail behavior.

\vspace{1mm}
\begin{theorem}[Approximation of Cumulative Reward by Discrete Phase-Type (DPH) Distribution]
\noindent Consider a DTMC $D = (S, s_0, P, \mathbf{r})$ with a finite state space and non-negative rewards 
$r_i \in \mathbb{R}_{\ge 0}$ and a cumulative reward towards absorption $F(r) = R_T(s_0)$. 
For any  \( \epsilon > 0 \), there exists a precision parameter $\delta>0$ and a DTMC
$\tilde{D} = (\tilde{S}, s_0, \tilde{P}, \delta\textbf{e})$, where $\textbf{e}$ is the vector of all ones, such that $F(r)$ can be approximated by the DPH distribution $F_{\tilde{D}}(r) = \operatorname{DPH}(s_0, \tilde{P}) (r/\delta)$, in such a way that 
$$\sup _{r \in\left[0, +\infty\right)}\left|F(r)-F_{\tilde{D}}(r)\right| \leq \epsilon . $$
\end{theorem}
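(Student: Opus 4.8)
The plan is to construct $\tilde D$ in two stages: first discretize the rewards, then encode the cumulative reward as a number of "time steps" in a discrete phase-type distribution. Fix $\epsilon>0$. Since the state space $S$ is finite, the cumulative reward to absorption $R_T(s_0)$ is a (possibly defective, but under transience proper) random variable whose law is a countable mixture of sums of the reward values $r_i$ along absorbing paths. The first step is to choose a discretization granularity $\delta>0$ and replace each reward $r_i$ by $m_i\delta$ where $m_i=\lceil r_i/\delta\rceil$ (or $\lfloor r_i/\delta\rfloor$ — either works with the right bookkeeping). This perturbs the cumulative reward along any path by at most $\delta$ per visited state; the subtle point is that the \emph{number} of states visited before absorption is unbounded, so a naive per-step error accumulates. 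I would handle this by exploiting transience: there exist $C>0$ and $\rho\in(0,1)$ with $\Pr(T>n)\le C\rho^n$, so the contribution of paths longer than $N=N(\epsilon)$ is below $\epsilon/2$ uniformly, and on the remaining event the total reward perturbation is at most $N\delta$, which is made $<\epsilon/2$ (in the relevant CDF sense — a shift of the distribution by at most $N\delta$ changes $\sup_r$ of the CDF difference by a controlled amount, using that we can absorb this into the choice of $\delta$ together with a modulus-of-continuity argument on $F$, or more cleanly by a coupling that shows $|F(r)-F_{\text{disc}}(r)|$ is bounded by the probability of a small-perturbation event).

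The second step is purely combinatorial: on the DTMC $\tilde D$ with all rewards equal to $\delta$, the cumulative reward to absorption equals $\delta$ times the number of transitions taken before absorption, i.e. $R_{\tilde T}(s_0)=\delta\,\tilde T$. Hence $F_{\tilde D}(r)=\Pr(\delta\tilde T\le r)=\Pr(\tilde T\le r/\delta)$, and $\tilde T$, being the absorption time of a finite-state discrete-time Markov chain started at $s_0$, is by definition discrete phase-type with representation $(\boldsymbol{\alpha},\tilde P_{\text{sub}})$ where $\tilde P_{\text{sub}}$ is the sub-stochastic restriction of $\tilde P$ to the transient states and $\boldsymbol{\alpha}=\mathbf e_{s_0}$. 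This gives exactly $F_{\tilde D}(r)=\operatorname{DPH}(s_0,\tilde P)(r/\delta)$ as claimed. To realize the reward discretization as an actual DTMC $\tilde D$ with constant reward $\delta\mathbf e$, each original state $i$ with discretized reward $m_i\delta$ is replaced by a short deterministic chain of $m_i$ fresh states each carrying reward $\delta$, the last of which inherits the outgoing transitions of $i$; this is the same transformation used in the footnote of Section~\ref{secMarkovRewardProcess} and preserves the cumulative-reward law exactly, so the only approximation incurred is the $\delta$-rounding from step one.

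Combining: choose $N$ so the tail beyond $N$ steps contributes $<\epsilon/2$, then choose $\delta$ small enough that the rounding-induced error on paths of length $\le N$ is $<\epsilon/2$ in the uniform-CDF norm; the triangle inequality yields $\sup_{r\ge 0}|F(r)-F_{\tilde D}(r)|\le\epsilon$. The main obstacle is the uniform-in-$r$ control of the CDF error under reward rounding when path lengths are unbounded: one must convert a "small additive perturbation of a random variable" into a "small sup-norm perturbation of its CDF," which is not automatic if $F$ has jumps or steep regions. The cleanest route is a coupling argument — couple the original and discretized reward processes on the same probability space so that they agree on all transition choices; then $|R_T-R_{\tilde T}|\le\delta\cdot(\text{number of original states visited})$ pathwise, and $\sup_r|F(r)-F_{\tilde D}(r)|\le\Pr(|R_T-R_{\tilde T}|>0\text{ matters})\le\Pr(T>N)+(\text{mass of }F\text{ in any interval of length }N\delta)$, the latter being small because one can further pre-smooth or simply observe that pushing $\delta\to0$ with $N$ fixed drives $N\delta\to0$ and $F$ has at most finitely many atoms of size bounded away from zero in any compact set. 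I would state this coupling explicitly and keep the tail/rounding split as the two $\epsilon/2$ budgets.
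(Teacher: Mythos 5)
Your construction is essentially the paper's own proof: round each reward up to $k_i\delta$, unfold state $s_i$ into a deterministic chain of $k_i$ states each carrying reward $\delta$ so that the cumulative reward becomes $\delta$ times the absorption time of $\tilde D$ (hence DPH), bound the pathwise error by $0 \le \tilde R_T - R_T \le T\delta$, and control the probability that $T$ is large --- the paper uses Markov's inequality $\Pr(T \ge \epsilon/\delta) \le E[T]\delta/\epsilon$ where you use geometric tails from transience, an immaterial difference. The one point where you go beyond the paper is in flagging that a pathwise bound does not automatically yield a \emph{uniform} bound on the CDF difference when $F$ has atoms (and here $F$ is purely atomic, being supported on the countable set of path-reward sums); this worry is well placed --- the paper's own proof in fact only concludes convergence at continuity points of $F$, and at an atom $r_0$ of mass $p$ the ceiling discretization forces $F(r_0)-F_{\tilde D}(r_0)\ge p$ for any $\delta$ that does not exactly divide the rewards along the paths realizing $r_0$, so neither your pre-smoothing sketch nor the paper's argument closes this gap; the supremum must be restricted to continuity points of $F$ (or a weaker metric such as the L\'evy metric used) for the bound to hold as stated.
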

\begin{proof}
\noindent For any given precision $ \delta > 0 $, we construct an extended DTMC $\tilde{D} = (\tilde{S}, \tilde{P}, \tilde{\mathbf{r}})$ by discretizing the rewards in $D$. Specifically, for each reward $r_i \in \mathbf{r}$, define the approximate integer reward $ k_i = \left\lceil {r_i}/{\delta} \right\rceil $, where $k_i \delta$ is the closest integer multiple of $\delta$ that approximates $r_i$. The extended reward vector $\tilde{\mathbf{r}} = \{\delta, \ldots, \delta\} = \delta \mathbf{e}$ assigns a constant reward of $\delta$ to each state.
For each state $s_i \in S$ with approximate reward $ k_i \delta$, we introduce $n_i = k_i - 1$ additional intermediate states, denoted as $ s_{i,1}, s_{i,2}, \ldots, s_{i,n_i}$. The extended state space becomes:
\[\tilde{S} = \{ s_i, s_{i,1}, \ldots, s_{i,n_i} \colon s_i \in S \}.\]
The extended transition matrix $\tilde{P} \in \mathbb{R}^{|\tilde{S}| \times |\tilde{S}|}$ is defined as follows: for each transition $s_i \to s_j$ in the original matrix $P$, we introduce intermediate transitions $s_i \to s_{i,1} \to s_{i,2} \to \cdots \to s_{i,n_i} \to s_j$. Formally, the extended transition matrix $\tilde{P}$ replaces each $p_{ij}$ by $ \tilde{P}_{ij} $, defined as:
\[
\tilde{P}_{ij} = \left(\begin{array}{cccc}
0 & 1 &  \cdots & 0 \\
0 & 0 &  \cdots & 0 \\
\vdots & \vdots & \ddots & 1 \\
p_{ij} & 0 & \cdots & 0
\end{array}\right)_{k_i \times k_i} 
= \left(\begin{array}{cc}
0 & \mathbb{I}_{n_i \times n_i} \\
p_{ij} & 0
\end{array}\right).
\]
The maximum error per reward is bounded by $0 \leq k_i \delta - r_i < \delta$. Therefore, the cumulative reward error after the termination time $T$ satisfies $0 \leq \tilde{R}_T(s_0) - R_T(s_0) \leq T \delta$. Applying Markov's inequality yields:
\[
Pr\left( |\tilde{R}_T(s_0) - R_T(s_0)| \geq \epsilon \right) \leq Pr\left( T \geq \frac{\epsilon}{\delta} \right) \leq \frac{E[T] \delta}{\epsilon}.
\]
As $\delta \to 0$, the right-hand side approaches zero, leading to the weak convergence of the cumulative distribution functions: $\lim_{\delta \to 0} F_{\tilde{D}}(r) = F(r)$  for any continuity point $r$ of $F$.
\end{proof}

Thus, the extended DTMC \( \tilde{D} \), constructed by introducing intermediate states and modifying the transition matrix \( P \) to accumulate integer rewards in steps of \( \epsilon \), approximates the cumulative reward distribution of the original DTMC $D$. The resulting transition matrix \( \tilde{P} \) provides a DPH representation of the reward distribution, enabling further approximation by other tractable distributions, such as Erlang mixtures, while ensuring accurate control over the approximation error.

\vspace{1mm}
\begin{theorem}[Continuous Approximation of DPH by Erlangization]
\label{thm: dph_to_ph}
Consider $X$ be a DPH random variable with representation $\operatorname{DPH}(\boldsymbol{\alpha}, P)$. Define the continuous phase-type random variable $X_m$ by $\operatorname{PH}_c\left(\boldsymbol{\gamma}_m, T_m\right)$, where $\boldsymbol{\gamma}_m=\boldsymbol{\alpha} \otimes \mathbf{e}(1)$ and $T_m=\lambda\left(I \otimes J_0+P \otimes J_1\right)$ where
$$
J_0=\left(\begin{array}{cccc}
-1 & 1 & & \\
 & \ddots & \ddots & \\
 & & -1 & 1 \\
 & & & -1
\end{array}\right), \, J_1=\left(\begin{array}{llll}
0 & & &  \\
& 0 & &  \\
& & \ddots & \\
1 & 0 & \ldots & 0
\end{array}\right)$$
such that by replacing the constant sojourn time in Markov chain of $X$ with an Erlang random variable $E_{\lambda, m}$, where $\lambda = m$. Then $X_m$ is a continuous approximation of $X$ and converges to $X$ in distribution and moments, such that:
$$
\mathbb{E}[X_m] = \mathbb{E}[X]
$$
$$
\mathbb{E}[X_m^2] = \mathbb{E}[X^2] + \frac{\mathbb{E}[X]}{m}
$$
$$
\mathbb{E}[X_m^n] - \mathbb{E}[X^n] = \frac{\mathbb{E}[X^{n-1}] d_{n, n-1}}{m} + o\left(\frac{1}{m}\right)
$$
 
$X_m$ converges to $X$ in Laplace-Stieltjes transformation hence also convergence in distribution as $m \to \infty$.
\end{theorem}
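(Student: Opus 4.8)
The plan is to exploit the fact that the Erlangization construction leaves the \emph{discrete routing} of the phase-type chain untouched and only replaces each unit sojourn time by an independent Erlang block. Concretely, I would first unpack the Kronecker structure $\boldsymbol{\gamma}_m=\boldsymbol{\alpha}\otimes\mathbf{e}(1)$, $T_m=\lambda(I\otimes J_0+P\otimes J_1)$ and read off its probabilistic meaning: the chain enters the micro-phase $1$ of macro-state $j$ with probability $\alpha_j$; the block $J_0$ moves it through micro-phases $1\to 2\to\cdots\to m$ with exponential$(\lambda)$ holding times (a total $\operatorname{Erlang}(m,\lambda)$ sojourn), while the block $P\otimes J_1$ performs a macro-jump from the last micro-phase of state $j$ to micro-phase $1$ of state $k$ with probability $p_{jk}$ (or to absorption). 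Hence the sequence of macro-states visited, and in particular the number $X$ of macro-steps to absorption, is governed by exactly the same $(\boldsymbol{\alpha},P)$ as the DPH, while the sojourn spent in each visited macro-state is an i.i.d.\ Erlang variable independent of the routing. Taking $\lambda=m$ (so each sojourn has unit mean, matching the constant sojourn time $1$ of $X$) this yields the key representation
$$X_m \stackrel{d}{=} \sum_{i=1}^{X} E_i,\qquad E_i \stackrel{\text{i.i.d.}}{\sim}\operatorname{Erlang}(m,m),\quad (E_i)_{i\ge 1}\perp X,$$
from which everything else follows by conditioning on $X$.

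For the moments I would use the elementary facts that the $j$-th cumulant of an $\operatorname{Erlang}(m,m)$ variable is $\kappa_j=(j-1)!/m^{j-1}$, so $\kappa_1=1$, $\kappa_2=1/m$, and $\kappa_j=O(m^{-(j-1)})$ for $j\ge 2$. The first moment is immediate from Wald's identity: $\mathbb{E}[X_m]=\mathbb{E}[X]\,\mathbb{E}[E_1]=\mathbb{E}[X]$. For the second moment the conditional variance formula gives $\mathbb{E}[X_m^2\mid X=N]=N/m+N^2$, and averaging over $N=X$ gives $\mathbb{E}[X_m^2]=\mathbb{E}[X^2]+\mathbb{E}[X]/m$, exactly. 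For general $n$ I would expand the $n$-th moment of the conditional sum in cumulants, using that $\sum_{i=1}^N E_i$ has $j$-th cumulant $N\kappa_j$ and the partition-sum formula $\mathbb{E}\big[(\sum_{i=1}^N E_i)^n\mid X=N\big]=\sum_{\pi}N^{|\pi|}\prod_{B\in\pi}\kappa_{|B|}$: the all-singletons partition contributes $N^n$, the $\binom{n}{2}$ partitions with one block of size two contribute $\binom{n}{2}\kappa_2 N^{n-1}=\binom{n}{2}N^{n-1}/m$, and every remaining partition contains a block of size $\ge 3$ or two blocks of size $2$, hence contributes $O(m^{-2})$. Averaging over $X$ (legitimate since a DPH distribution has geometrically decaying tails, so all $\mathbb{E}[X^j]$ are finite) gives $\mathbb{E}[X_m^n]=\mathbb{E}[X^n]+\binom{n}{2}\mathbb{E}[X^{n-1}]/m+o(1/m)$, i.e.\ $d_{n,n-1}=\binom{n}{2}$, consistent with the $n=2$ case.

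For convergence in distribution I would again condition on $X$ and work with the Laplace–Stieltjes transform: $\mathbb{E}[e^{-sX_m}]=\mathbb{E}\big[\widehat{E}(s)^{X}\big]$ where $\widehat{E}(s)=(\lambda/(\lambda+s))^m=(1+s/m)^{-m}$ is the LST of $E_1$. Since $(1+s/m)^{-m}\to e^{-s}$ as $m\to\infty$ for every fixed $s\ge 0$ and $0\le\widehat{E}(s)^X\le 1$, dominated convergence gives $\mathbb{E}[e^{-sX_m}]\to\mathbb{E}[(e^{-s})^X]=\mathbb{E}[e^{-sX}]$, the LST of the (atomic) law of $X$; the continuity theorem for Laplace transforms then yields $X_m\Rightarrow X$.

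I expect the main obstacle to be not any single computation but the first step: carefully justifying from the Kronecker-product generator $T_m$ that $X_m$ is genuinely the random Erlang sum $\sum_{i=1}^X E_i$ with sojourns independent of the routing — once that representation is established, the moment identities and the transform limit are routine. A secondary point requiring care is the bookkeeping in the cumulant expansion, to confirm that the coefficient of $1/m$ is exactly $\binom{n}{2}\mathbb{E}[X^{n-1}]$ and that the remainder is genuinely $o(1/m)$ (indeed $O(1/m^2)$), which relies on the finiteness of all moments of the DPH variable $X$.
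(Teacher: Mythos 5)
Your proof is correct, but it is worth noting that the paper itself gives no argument here: its ``proof'' is a one-line deferral to Theorem~1 of the cited reference on Erlangization, which establishes the result by manipulating the matrix Laplace--Stieltjes transform $\boldsymbol{\gamma}_m(sI-T_m)^{-1}\mathbf{t}_m$ of the phase-type representation. You instead extract the probabilistic content of the Kronecker construction --- that $I\otimes J_0$ implements an $\operatorname{Erlang}(m,\lambda)$ sojourn within each macro-state and $P\otimes J_1$ performs the macro-routing from the last micro-phase --- to obtain the compound-sum representation $X_m\stackrel{d}{=}\sum_{i=1}^{X}E_i$ with $(E_i)\perp X$, after which Wald's identity, the conditional variance formula, the cumulant expansion $\mathbb{E}[S^n\mid X=N]=\sum_{\pi}N^{|\pi|}\prod_{B\in\pi}\kappa_{|B|}$ with $\kappa_j=(j-1)!/m^{j-1}$, and the elementary limit $(1+s/m)^{-m}\to e^{-s}$ with dominated convergence deliver all four claims. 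This route is more elementary and self-contained than the matrix-analytic one, makes the exactness of the first two moment identities transparent, and pins down the undefined constant in the statement as $d_{n,n-1}=\binom{n}{2}$ (consistent with the $n=2$ case); the only step genuinely requiring care is the one you flag, namely verifying that the sojourn times are independent of the routing, which follows because the micro-phase progression under $J_0$ depends on neither the identity of the macro-state nor the subsequent macro-jump, and the finiteness of all moments of $X$ (geometric tails of a DPH law) licenses the interchange of expectation and the $O(m^{-2})$ remainder bound.
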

\begin{proof}
    See ref.\cite{he2022continuous}, Theorem 1.      
\end{proof}

\vspace{1mm}
Theorem~\ref{thm: dph_to_ph} provides a method to obtain the continuous extension of the cumulative reward together with the previous theorem. The next theorem shows this continuous extension can be approximated by an Erlang  mixture.
\begin{theorem}[Mixture of Erlang Distributions Approximation]
Let $F(x)$ be the cumulative distribution function (CDF) of a positive random variable. For a fixed $\beta>0$, define an approximate $\operatorname{CDF} F_\beta(x)$:
$$
F_\beta(x)=\sum_{j=1}^{\infty} p_j(\beta) \cdot F_{\operatorname{Erlang}(j, \beta)}(x)
, \quad x \geq 0
$$
\noindent
where 
 $p_j(\beta)=F(j \beta)-F((j-1) \beta), \quad j=1,2, \ldots$ 
and
$F_{\text {Erlang }(j, \beta)}(x)=1-\sum_{k=0}^{j-1} {e^{-x / \beta}(x / \beta)^k}/{k!}$
is the CDF of an Erlang distribution with shape parameter $j$ and scale parameter $\beta = 1/ \lambda$. Then, we have the convergence: $$\lim _{\beta \rightarrow 0} F_\beta(x)=F(x)$$
for any continuity point $x$ of $F$.
\end{theorem}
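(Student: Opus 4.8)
The plan is to show that $F_\beta$ is exactly the CDF of a discrete mixture of Erlang distributions whose mixing weights are the probabilities that a $\beta$-spaced lattice rounding of $X$ lands in each cell, and then to pass to the limit $\beta \to 0$ by a direct estimate on $|F_\beta(x) - F(x)|$. First I would record the interpretation of the weights: if $X$ has CDF $F$, then $p_j(\beta) = F(j\beta) - F((j-1)\beta) = \Pr\bigl((j-1)\beta < X \le j\beta\bigr)$, so $\sum_{j\ge 1} p_j(\beta) = 1$ (using $F(0^-)=0$ since $X$ is positive), and $F_\beta$ is a genuine CDF. Probabilistically, $F_\beta$ is the law of $\sum_{i=1}^{N} Y_i$ where $N$ is the lattice index containing $X$ (i.e. $N = \lceil X/\beta\rceil$) and the $Y_i$ are i.i.d. $\mathrm{Exp}(\lambda)=\mathrm{Exp}(1/\beta)$ independent of $X$; equivalently, $F_\beta$ is the CDF of $S_{\lceil X/\beta \rceil}$, a sum of $\lceil X/\beta\rceil$ independent mean-$\beta$ exponentials. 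This reframing is what makes the convergence transparent.

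Next I would establish the limit. Fix a continuity point $x$ of $F$ and let $\eta>0$. Write $Z_\beta := S_{\lceil X/\beta\rceil}$ for the random variable with CDF $F_\beta$, built on a common probability space with $X$ as above. Conditionally on $X$, $Z_\beta$ is a sum of $\lceil X/\beta\rceil$ exponentials, so $\mathbb{E}[Z_\beta \mid X] = \beta\lceil X/\beta\rceil$, which lies in $[X, X+\beta)$, and $\mathrm{Var}(Z_\beta\mid X) = \beta^2\lceil X/\beta\rceil \le \beta(X+\beta)$. Hence by the conditional Chebyshev inequality, for any $\epsilon>0$,
$$
\Pr\bigl(|Z_\beta - X| \ge \epsilon\bigr) \le \Pr\bigl(|Z_\beta - \beta\lceil X/\beta\rceil| \ge \epsilon - \beta\bigr) \le \frac{\mathbb{E}\bigl[\beta(X+\beta)\bigr]}{(\epsilon-\beta)^2} \xrightarrow[\beta\to 0]{} 0,
$$
provided $\mathbb{E}[X]<\infty$ (and if $\mathbb{E}[X]=\infty$ one first truncates $X$ at a large level and controls the tail, since $F$ is a fixed CDF). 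Thus $Z_\beta \to X$ in probability, hence in distribution, and therefore $F_\beta(x) = \Pr(Z_\beta \le x) \to \Pr(X \le x) = F(x)$ at every continuity point $x$ of $F$, which is the claim.

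An alternative, entirely analytic route — closer in spirit to Theorem 4.2 — avoids the probabilistic coupling: one shows directly that the Laplace–Stieltjes transform of $F_\beta$ converges pointwise to that of $F$. Since $F_{\mathrm{Erlang}(j,\beta)}$ has LST $(1+\beta s)^{-j} = (1 - \frac{-\beta s}{1+\beta s})^{-j}$, one computes $\widehat{F}_\beta(s) = \sum_{j\ge 1} p_j(\beta)(1+\beta s)^{-j} = \mathbb{E}\bigl[(1+\beta s)^{-\lceil X/\beta\rceil}\bigr]$, and as $\beta\to0$ the integrand $\to e^{-sX}$ pointwise and is bounded by $1$ for $\mathrm{Re}(s)\ge 0$, so dominated convergence gives $\widehat{F}_\beta(s) \to \mathbb{E}[e^{-sX}] = \widehat{F}(s)$; the continuity theorem for LSTs then yields weak convergence. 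I expect the main obstacle to be purely bookkeeping rather than conceptual: making the "geometric mixture of Erlangs is the law of a randomly-stopped sum of exponentials" identification precise (i.e. verifying $\sum_j p_j(\beta) F_{\mathrm{Erlang}(j,\beta)}(x)$ really equals $\Pr(S_{\lceil X/\beta\rceil}\le x)$ by conditioning on the cell index), and handling the case of infinite mean (or, more generally, ensuring the estimate is uniform enough to conclude at continuity points) — everything else is a one-line Chebyshev or dominated-convergence argument.
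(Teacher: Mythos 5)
Your proposal is correct, but it proves the theorem by a genuinely different decomposition than the paper. The paper fixes the continuity point $x$ and randomizes the \emph{evaluation point}: writing $F_{\operatorname{Erlang}(j,\beta)}(x)=\sum_{k\ge j}e^{-x/\beta}(x/\beta)^k/k!$ and swapping the order of summation gives the identity $F_\beta(x)=\mathbb{E}\bigl[F(U_{\beta,x})\bigr]$, where $U_{\beta,x}$ is $\beta$ times a Poisson variable with mean $x/\beta$, so that $\mathbb{E}[U_{\beta,x}]=x$ and $\operatorname{Var}(U_{\beta,x})=x\beta\to 0$; Chebyshev plus boundedness of $F$ and continuity at $x$ then finish the argument. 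Because the randomness lives in the (deterministic, finite) argument $x$ rather than in $X$, no moment or tail condition on $F$ ever enters. You instead randomize $X$ itself, identifying $F_\beta$ as the law of $S_{\lceil X/\beta\rceil}$, a stopped sum of mean-$\beta$ exponentials coupled to $X$, and apply Chebyshev conditionally on $X$. This is a valid dual route, and your mixture identification $\sum_j p_j(\beta)F_{\operatorname{Erlang}(j,\beta)}(x)=\Pr(S_{\lceil X/\beta\rceil}\le x)$ is exactly the conditioning-on-the-cell-index computation you describe; but it is the step where your approach pays a price the paper's does not: the unconditional bound $\mathbb{E}[\beta(X+\beta)]/(\epsilon-\beta)^2$ is vacuous when $\mathbb{E}[X]=\infty$, so the truncation argument you sketch (bound $\Pr(|Z_\beta-X|\ge\epsilon,\,X\le M)$ by $\beta(M+\beta)/(\epsilon-\beta)^2$ and absorb $\Pr(X>M)$ into the error) is not optional bookkeeping but a necessary part of the proof. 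Your alternative Laplace--Stieltjes route, $\widehat{F}_\beta(s)=\mathbb{E}[(1+\beta s)^{-\lceil X/\beta\rceil}]\to\mathbb{E}[e^{-sX}]$ by dominated convergence plus the continuity theorem, is clean, needs no truncation, and actually delivers weak convergence of the whole distribution in one stroke rather than pointwise convergence at each continuity point; it is arguably the tidiest of the three arguments, though furthest in spirit from the paper's elementary Poisson-concentration proof.
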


\begin{proof}
\noindent
We provide a simplified proof and a detailed proof see. ref~\cite{schweitzer1996stochastic}, Theorem 3.9.1.

\noindent For fixed $\beta$, let $U_{\beta, x}$ be a Poisson-distributed random variable with
$$
Pr\left\{U_{\beta, x}=k \beta\right\}=e^{-x / \beta} \frac{(x / \beta)^k}{k!}, \quad k=0,1, \ldots
$$
with expectation $\mathbb{E}\left[U_{\beta, x}\right]=x$ and variance $\mathbb{E}[U^2_{\beta, x}]=x \beta$. As $\beta \rightarrow 0, U_{\beta, x}$ concentrates around $x$ due to vanishing variance.
\noindent
For a bounded function $g(t)$, we show that:
$$
\lim _{\beta \rightarrow 0} \mathbb{E}\left[g\left(U_{\beta, x}\right)\right]=g(x)
$$
by bounding the error $\left|E\left[g\left(U_{\beta, x}\right)\right]-g(x)\right|$. Using Chebyshev's inequality, we control the tail probabilities, showing that as $\beta \rightarrow 0$, the contributions from $\left|U_{\beta, x}-x\right|$ become arbitrarily small.
\noindent
Applying this to $g(t)=F(t)$, the CDF of the random variable, we get:
$$
F(x)=\lim _{\beta \rightarrow 0} E\left[F\left(U_{\beta, x}\right)\right]
$$
\noindent which leads to:
$$
F(x)=\lim _{\beta \rightarrow 0} \sum_{j=1}^{\infty} p_j(\beta) \sum_{k=j}^{\infty} e^{-x / \beta} \frac{(x / \beta)^k}{k!}
$$
yielding the desired convergence:
$$
\lim _{\beta \rightarrow 0} F_\beta(x)=F(x)
$$
\end{proof}

\noindent
The combination of discretizing rewards, approximating DPH distributions with continuous phase-type distributions, and representing these as Erlang mixtures provides a solid foundation for approximating the cumulative reward distribution of a DTMC. In real applications, selecting the appropriate number of Erlang distributions in the mixture to get an acceptable approximation is an empirical task \cite{Verbelen_Gong_Antonio_Badescu_Lin_2015}.

 \section{Evaluation}\label{SecEval}
We evaluate our moment-matching approximation method with error analysis using case studies adapted from PRISM~\cite{kwiatkowska2002prism} and used also in previous literature~\cite{elsayed2024distributional}. Our implementation employs an Erlang mixture model with moment matching and maximum entropy using SciPy SLSQP~\cite{virtanen2020scipy}, and Gurobi~\cite{gurobi} for optimization, while the linear systems of equations required to compute the moments of the reward process are solved with Numpy~\cite{harris2020array}. The primary goal is to empirically assess the approximation performance of our method against the empirical distribution and traditional histogram-based approximation from the perspective of accuracy and computational tractability, as well as discuss hyper-parameters and overheads.

\vspace{1mm}
\noindent\textbf{Benchmarks.} 
The benchmarks, adapted from PRISM case studies~\cite{kwiatkowska2002prism}, include scenarios like message synchronization and Herman protocol~\cite{herman1990probabilistic}, which are modeled as DTMCs; for the MDP benchmarks (such as betting games, and grid world navigation), we evaluate the DTMC induced by the optimal policy produced by PRISM (reward schemes defined for the original models). We evaluate our method also on two models with continuous rewards, which cannot be analyzed by previous work~\cite{elsayed2024distributional}: UAV (our example model) and an adaptation of the Future Investor model~\cite{mciver2007results}.
The benchmark subjects cover a diverse range of state space sizes and transition structures, as summarized in Table~\ref{tab:benchmarks}.

\bigskip
\begin{table}[!ht]
    \centering
    \begin{tabular}{@{}lllllll@{}}
    \toprule
    \textbf{Subject}& \textbf{States} & \textbf{Transitions} & $d_{\texttt{KS}}$ (\cite{elsayed2024distributional})& $d_{\texttt{KS}}$ (our) \\
    \midrule

    GridWorld Navigation & 396 & 6,291 & 0.42 & 0.14\\
    Betting Game & 891   & 4,216 & 0.16 & 0.09\\
    DeepSea Treasure & 1,756 & 5, 672  & 0.28 & 0.13\\
    Herman & 2,048  & 177,148 & 0.27 & 0.09\\
    LeaderSync & 1,050   & 1,292 & 0.01 & 0.07 \\

    \midrule[\heavyrulewidth]

    UAV Energy & 13 & 29 & - & 0.05 \\
    Future Investor &  2,899 & 7,967 &- &0.17 \\
    \bottomrule
    \end{tabular}
    \vspace{1mm}
    \caption{Benchmark subjects with discrete rewards (top) and continuous rewards (bottom).}
    \label{tab:benchmarks}
\end{table}

\noindent \textbf{Experimental Settings.} 
We construct the approximate reward distribution from the initial state of each model using SLSQP~\cite{virtanen2020scipy}. To evaluate the accuracy of the approximation, we construct a baseline empirical reward distribution by running one million simulations of the models and collecting the reward accumulated for each run. For models using integer rewards, we also compare with the histogram distributions constructed by~\cite{elsayed2024distributional}.

To align the approximation problem with the truncated Stieltjes moment problem on \( (0, +\infty) \), we shift the location of the Erlang distributions to \( loc = \mu -\sigma \) ($\sigma$ being the standard deviation of the reward distribution), following the location setting in~\cite{gavriliadis2012truncated}. Based on~\cite{johnson1989matching}, the minimum number of Erlang components in the mixture required to match $K$ moments is $\lfloor K/2 \rfloor + 1$. To allow more flexibility in the optimization, unless otherwise specified, we use \( k = 3 \) moments and \( n = 3 \) components. 
\noindent The common rate parameter $\lambda$ is limited for optimization within the range $(0.01, 50)$ (based on our experience, the range was large enough to include the optimization results). The weights are initialized uniformly: $\omega_{i} = 1 / n$. The parameter $\gamma$ balancing the trade-off between moment matching and maximum entropy is set to $1$ for all the experiments.
As discussed in Section~\ref{secDistributionApproximation}, we restrict the Erlang shapes into a power-based sequence of shape parameters \( a_i = 3^i \), \( i \leq n \), which ensures a wide spread of options in the weighted selection of shapes~\cite{verbelen2013phase} to cover head, multi-modal, and tail behaviors-- we will evaluate the robustness of this choice of heuristics and compare with alternative choices in Section~\ref{secHyperparameters}.

\subsection{Approximation Accuracy}\label{secApproximationAccuracy}
\noindent To evaluate the accuracy of approximate distributions against the  baseline empirical distributions, we use the Kolmogorov-Smirnov (KS) statistic metric. KS measures the largest difference between the empirical CDF of the true values and the approximated CDFs with $D_{\text{KS}} = \sup_x \left| F_{\text{empirical}}(x) - F_{\text{approx}}(x) \right|$. The KS statistic is chosen for its focus on capturing the maximum discrepancy between distributions, making it well-suited and widely used for comparing distributions and evaluating approximation quality~\cite{massey1951kolmogorov, hong1991kolmogorov}.

\begin{figure}[ht]
\centering
\subfigure[LeaderSync (PDF)]{\includegraphics[width=0.24\textwidth]{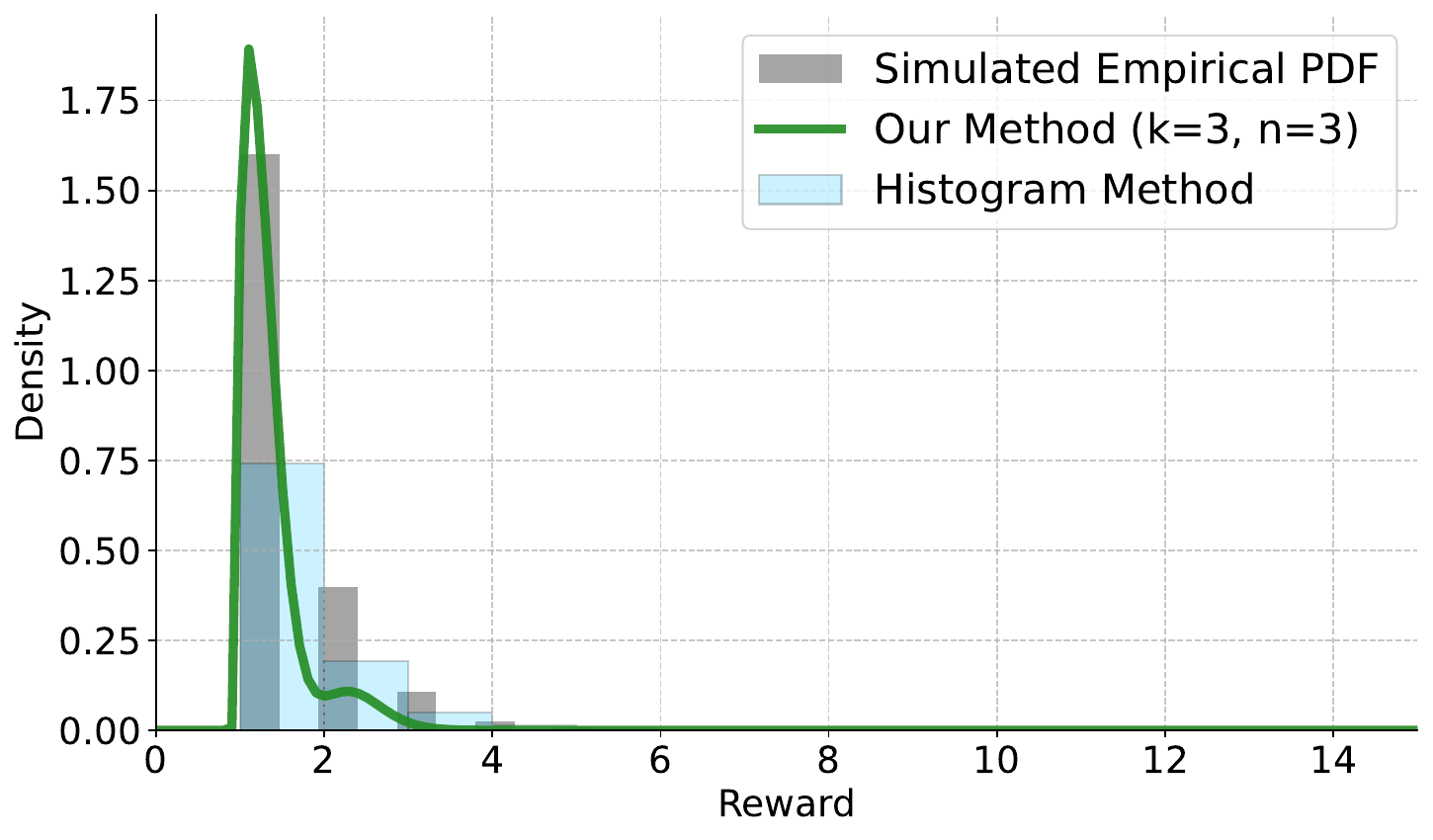}}\subfigure[Herman (PDF)]{\includegraphics[width=0.24\textwidth]{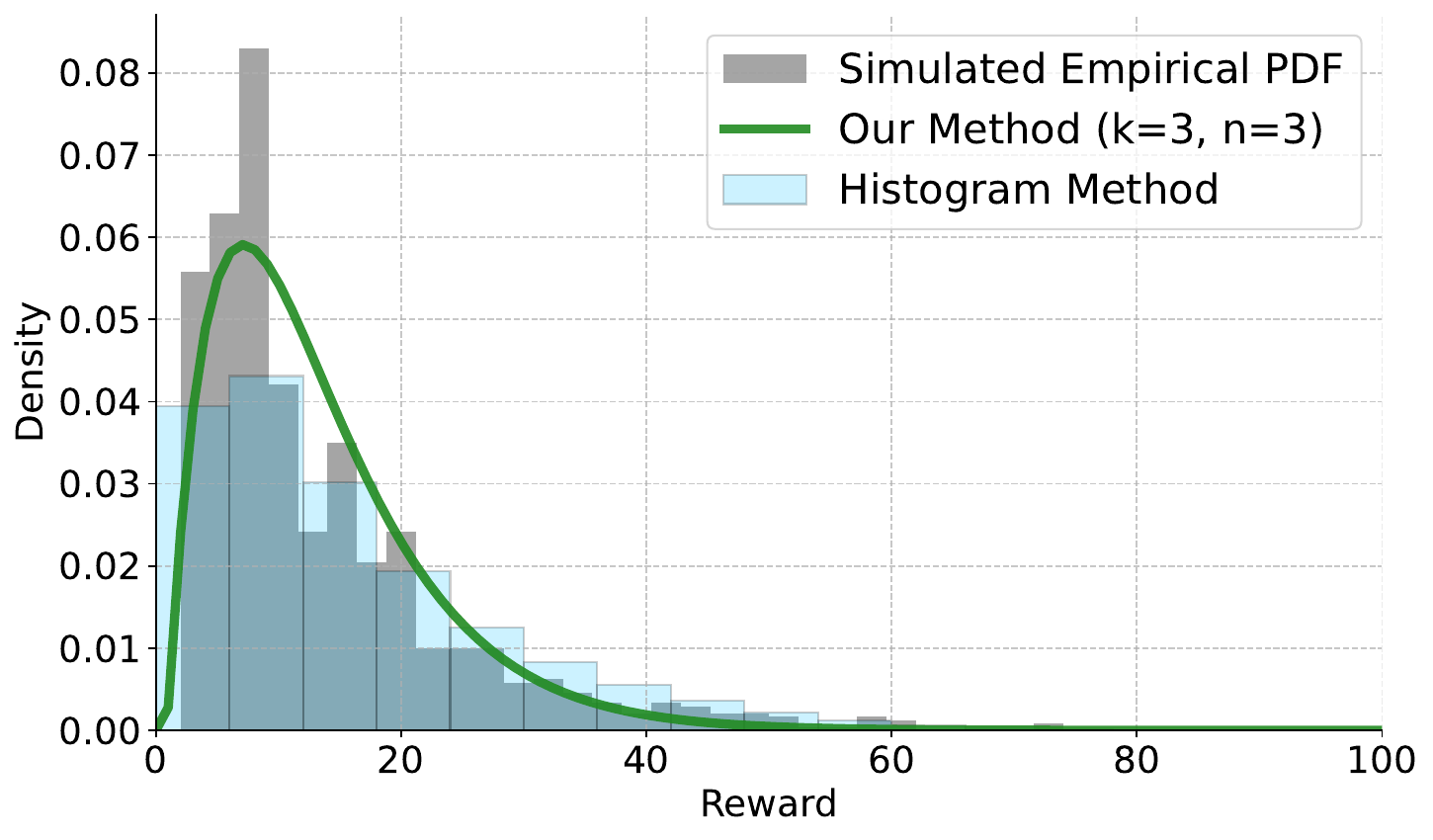}}\\
\subfigure[LeaderSync (CDF)]{\includegraphics[width=0.24\textwidth]{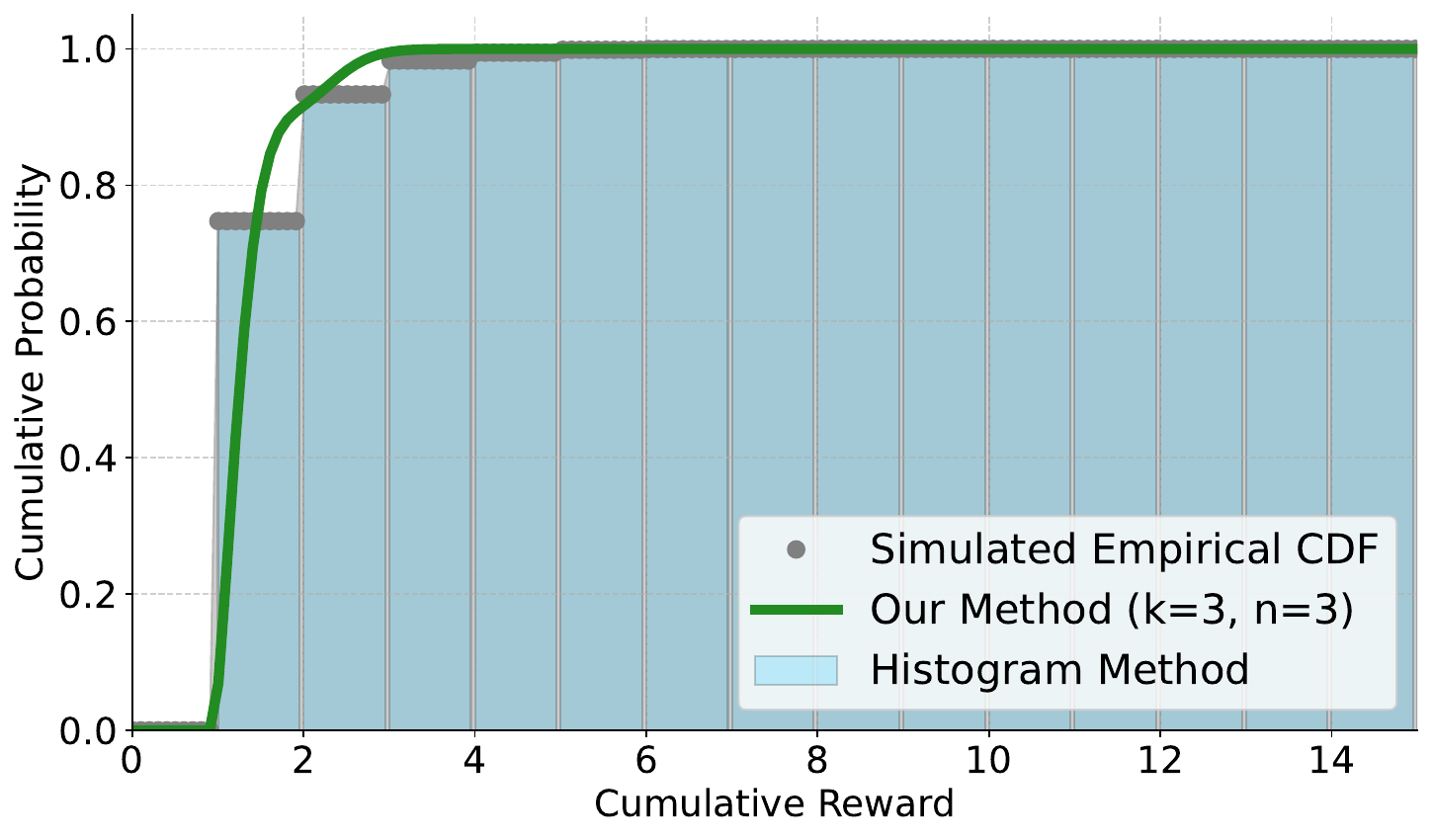}}\subfigure[Herman (CDF)]{\includegraphics[width=0.24\textwidth]{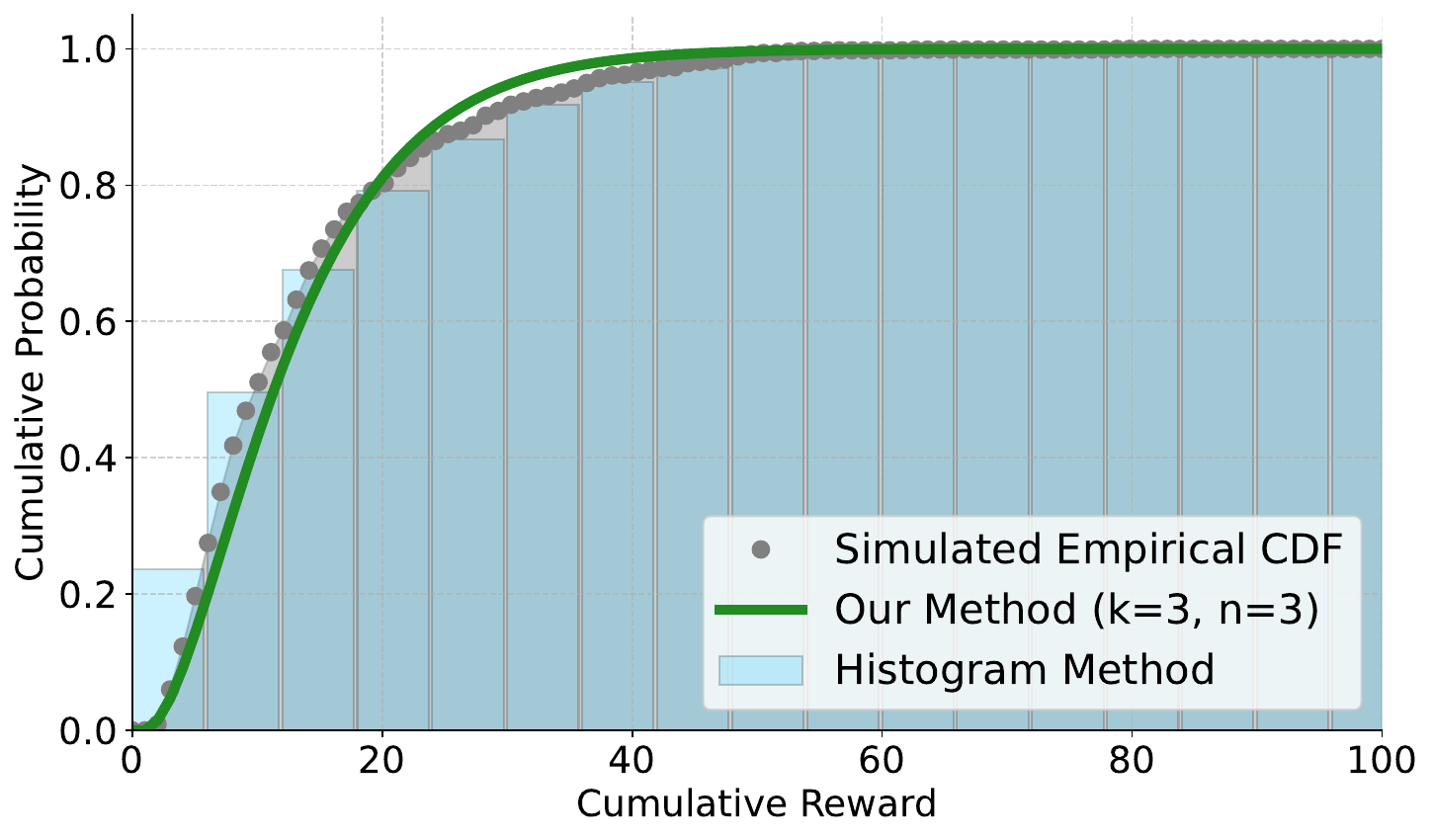}}
\caption{PDF and CDF for discrete reward space subjects (figure 1 of 2)}
\label{fig:pdfcdfdiscrete1}
\end{figure}

\begin{figure*}[tb]
\centering
\subfigure[GridWorld Navigation (PDF)]{\includegraphics[width=0.32\textwidth]{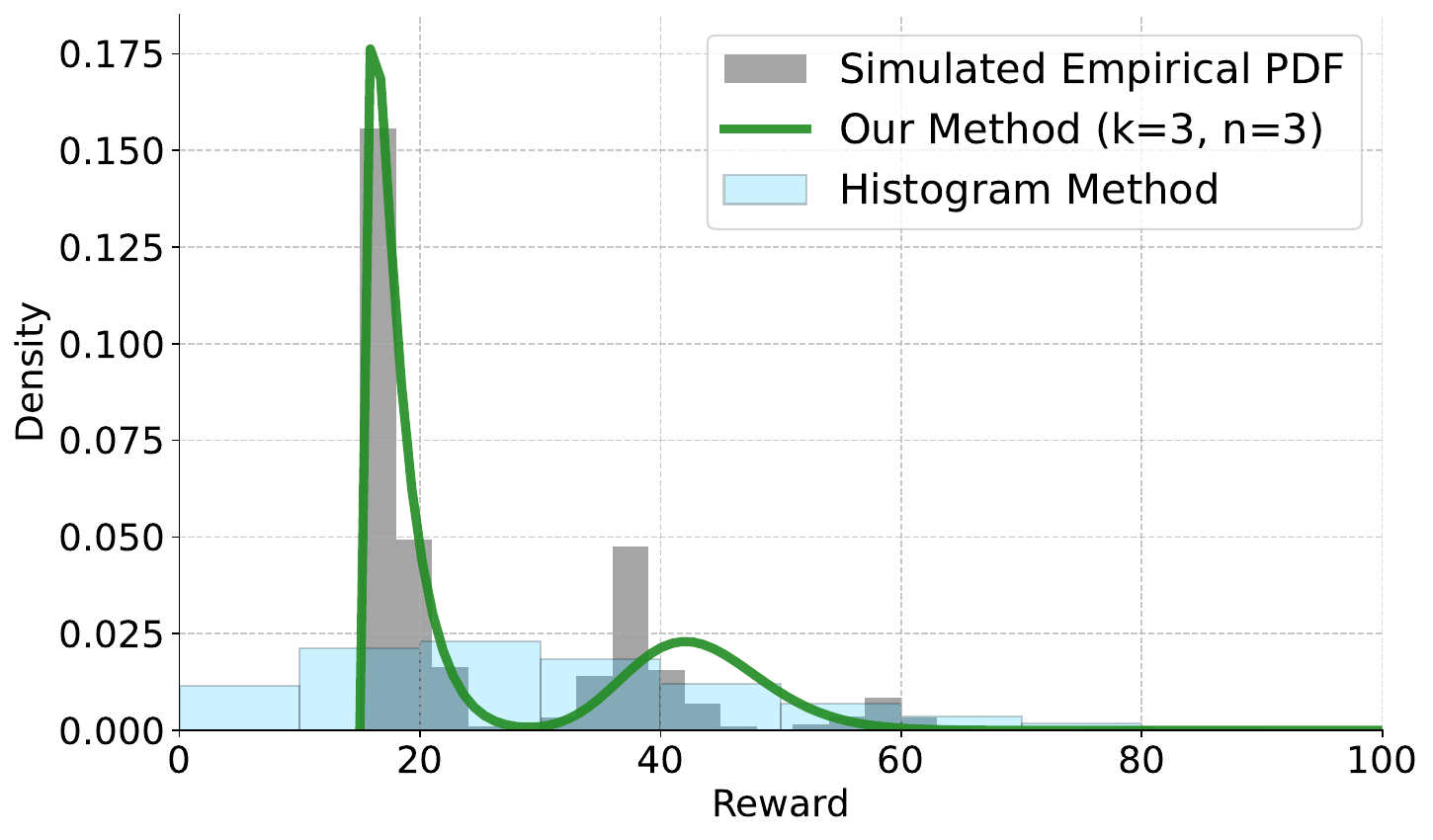}}
\subfigure[Betting Games (PDF)]{\includegraphics[width=0.32\textwidth]{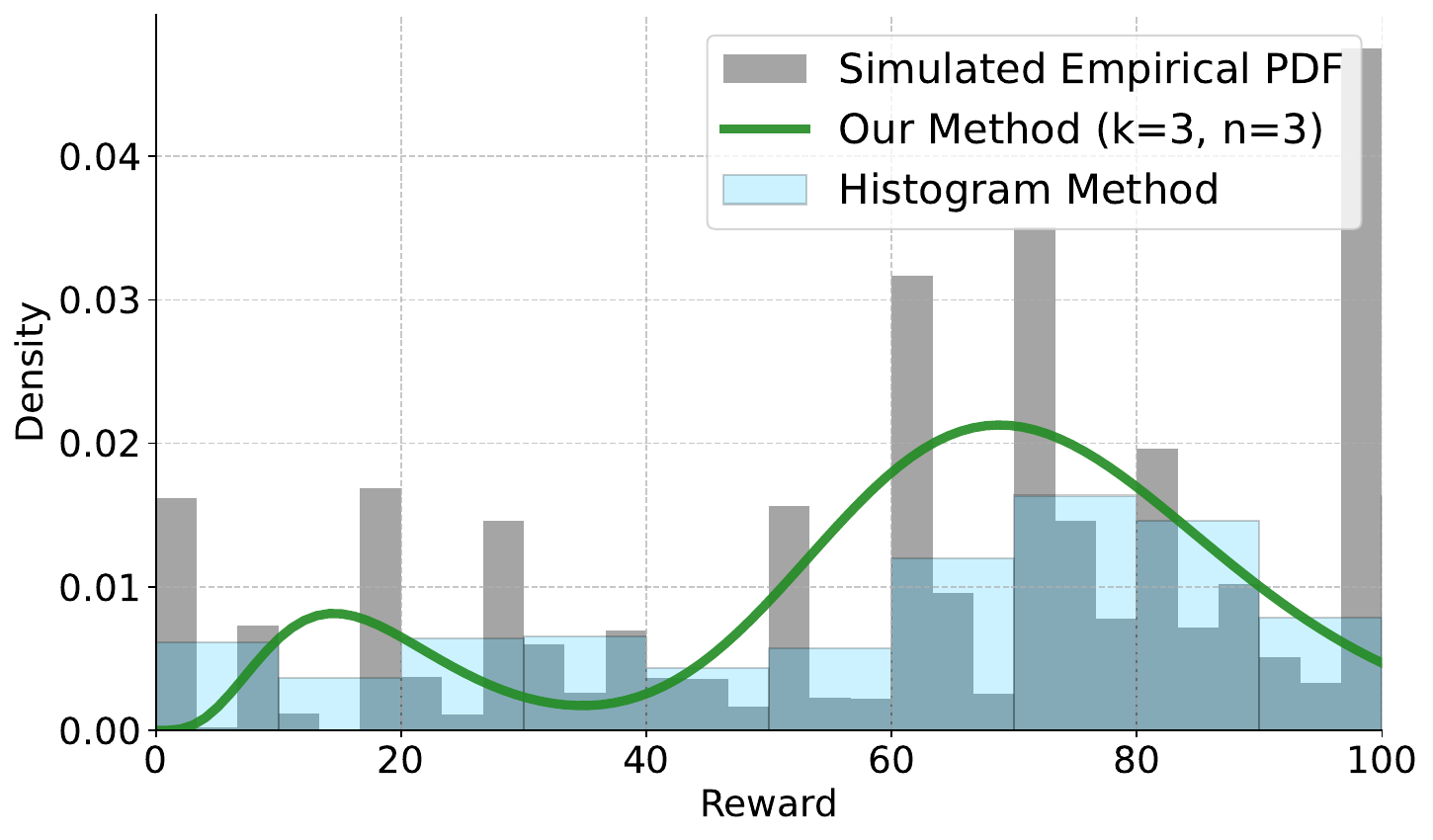}}
\subfigure[DeepSea Treasure (PDF)]{\includegraphics[width=0.32\textwidth]{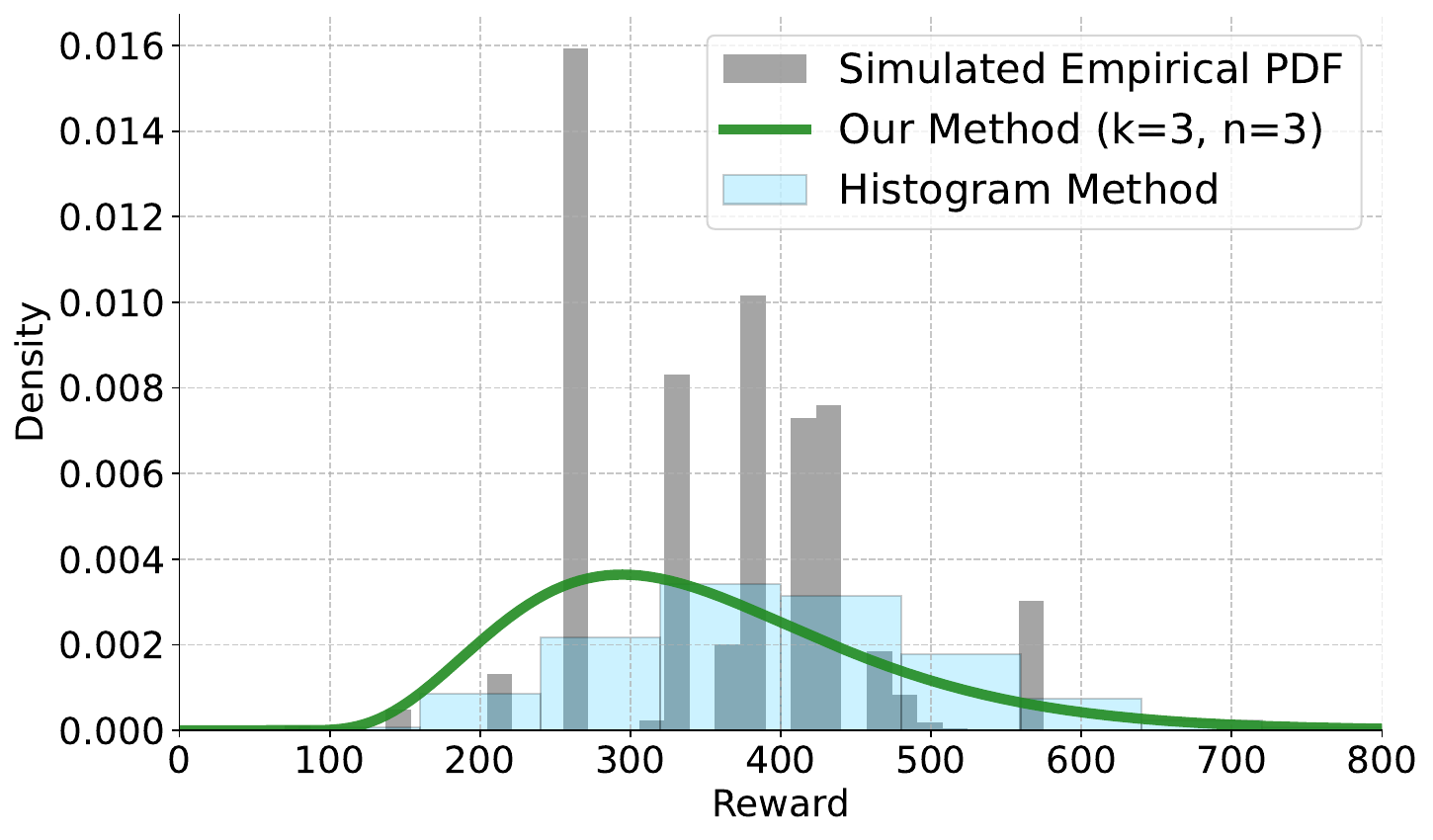}}\\
\subfigure[GridWorld Navigation (CDF)]{\includegraphics[width=0.32\textwidth]{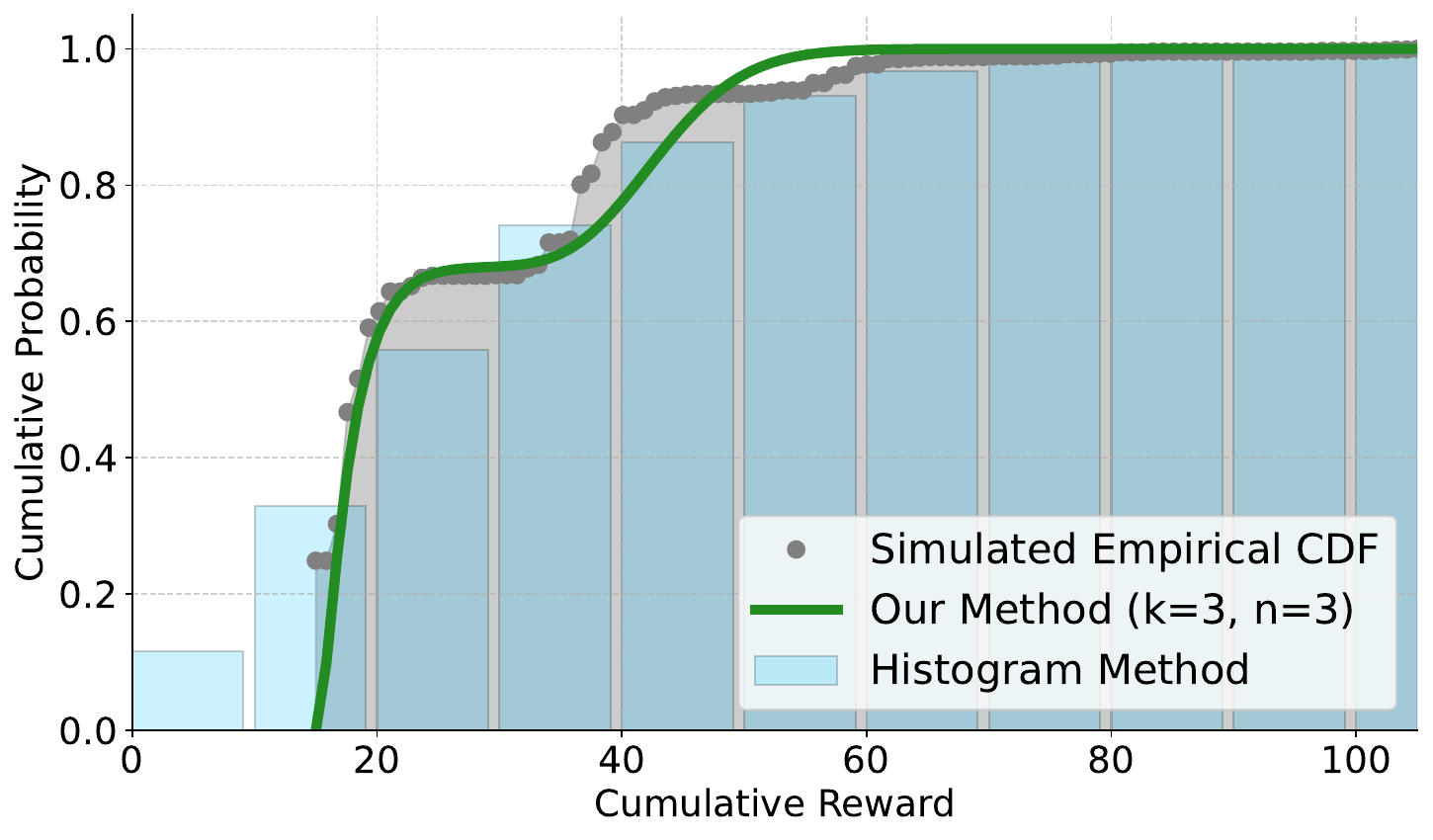}}
\subfigure[Betting Games (CDF)]{\includegraphics[width=0.32\textwidth]{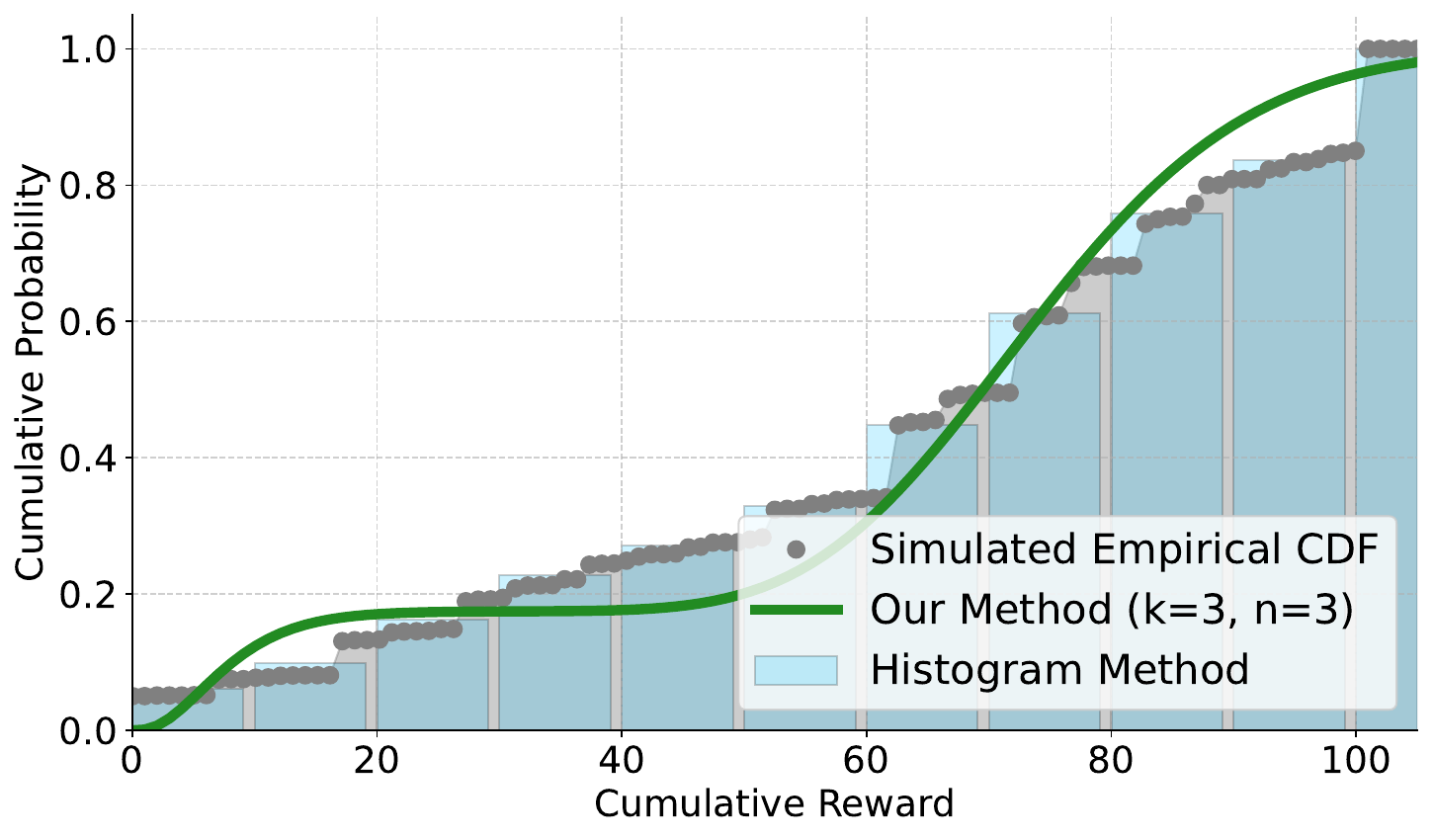}}
\subfigure[DeepSea Treasure (CDF)]{\includegraphics[width=0.32\textwidth]{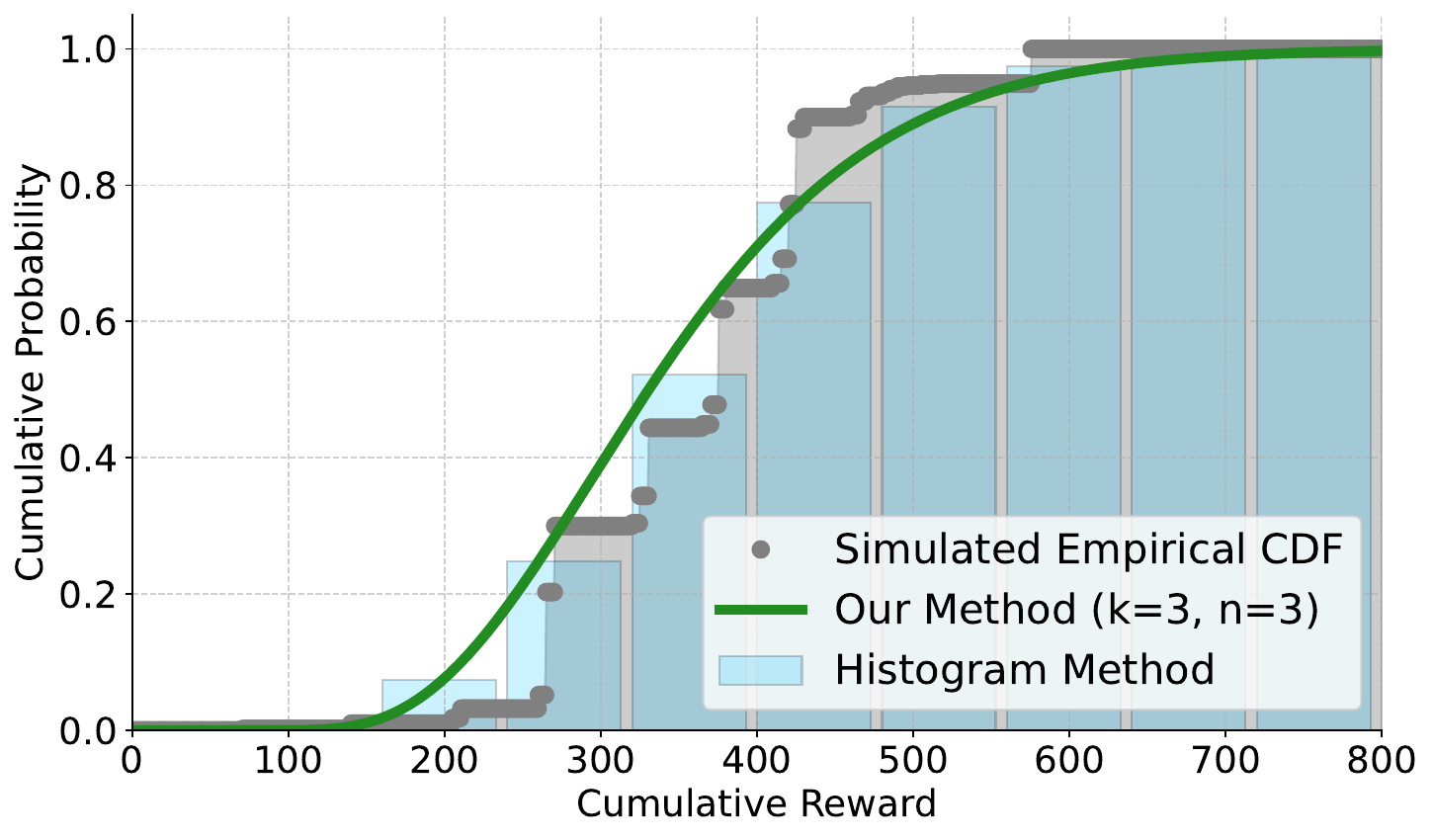}}
\caption{PDF and CDF for discrete reward space subjects (figure 2 of 2)}
\label{fig:pdfcdfdiscrete2}
\end{figure*}

\begin{figure}[ht]
\centering

\subfigure[UAV Energy (PDF)]{\includegraphics[width=0.24\textwidth]{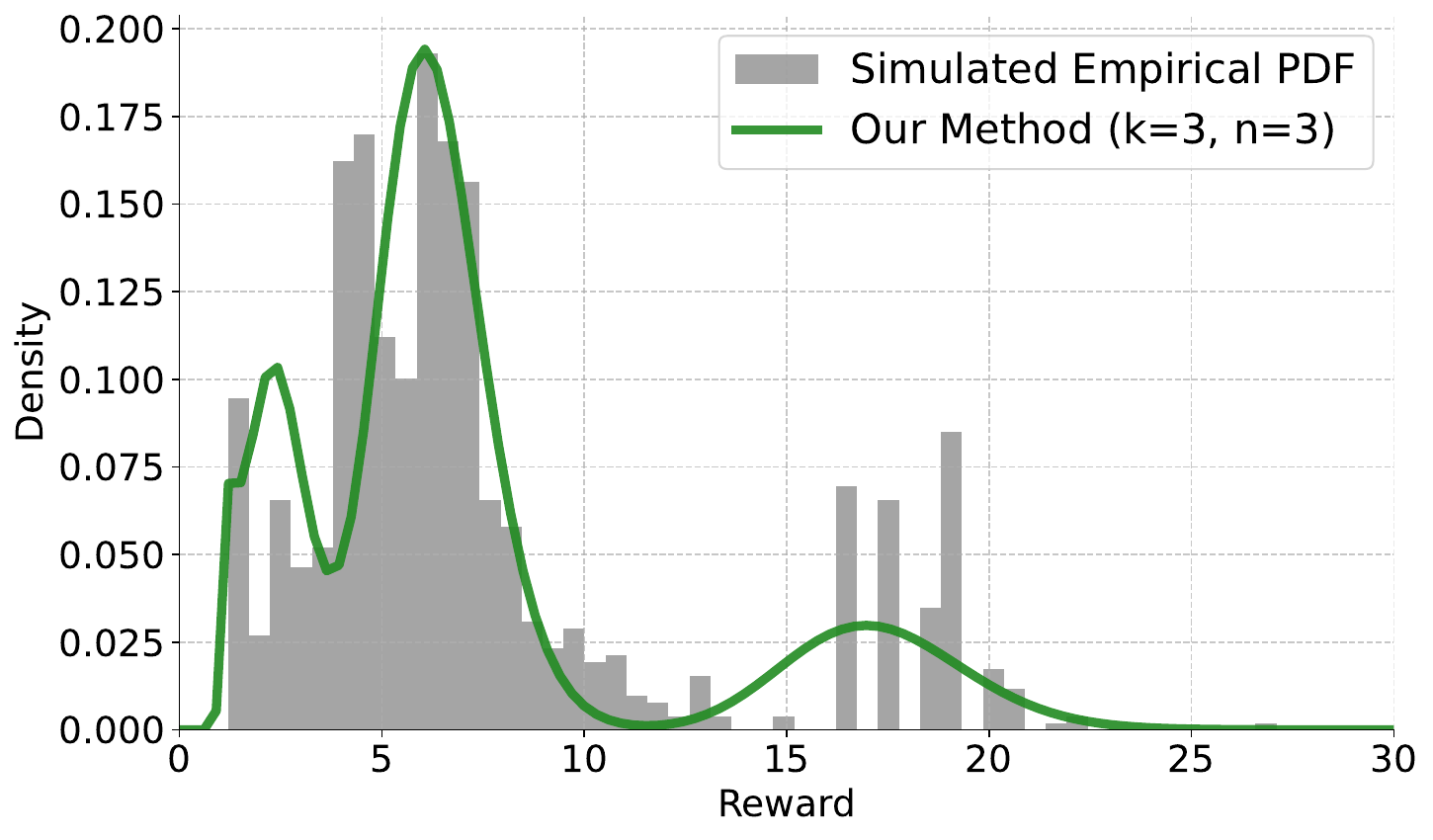}}
\subfigure[Future Investor (PDF)]{\includegraphics[width=0.24\textwidth]{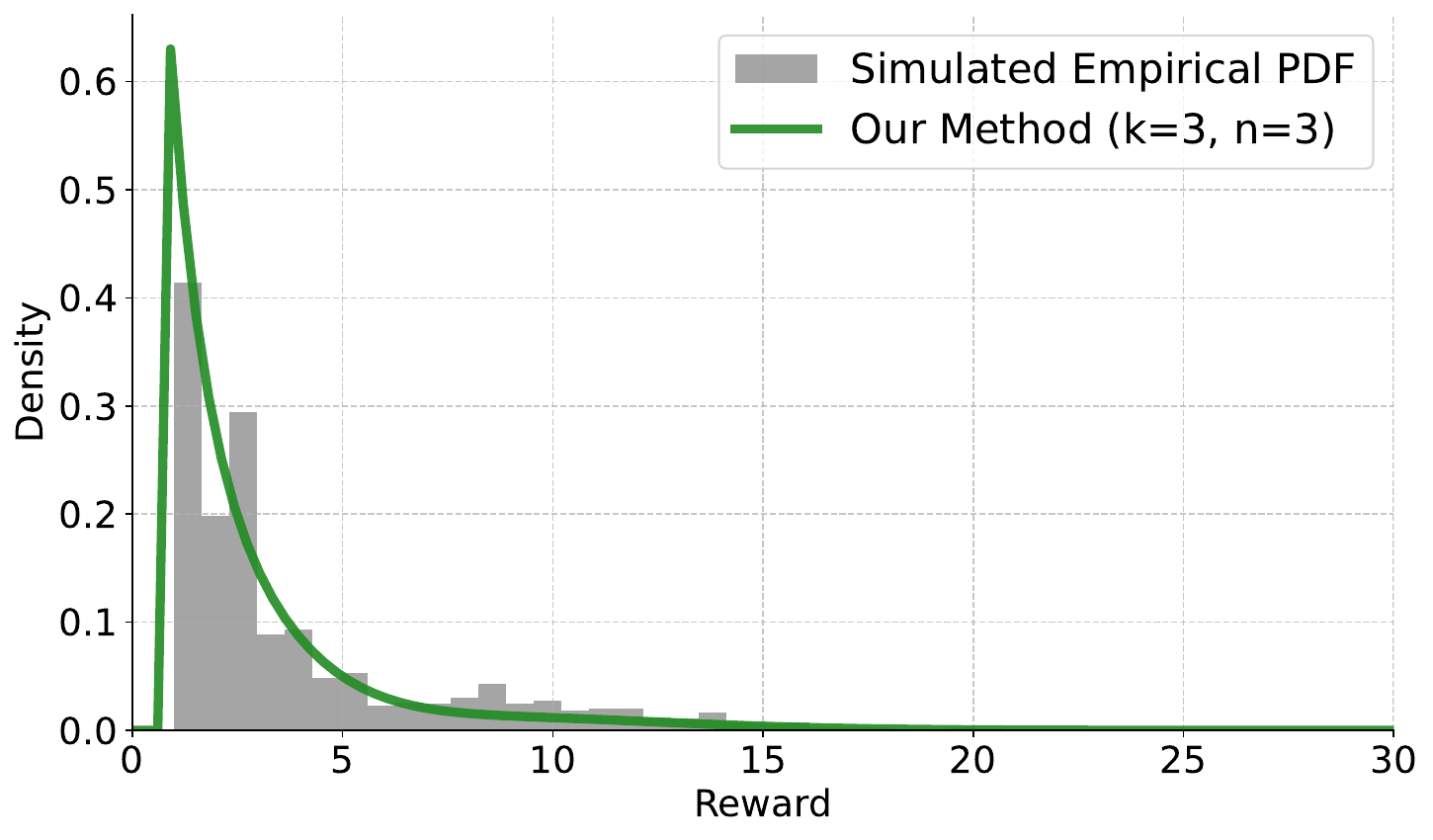}}\\
\subfigure[UAV Energy (CDF)]{\includegraphics[width=0.24\textwidth]{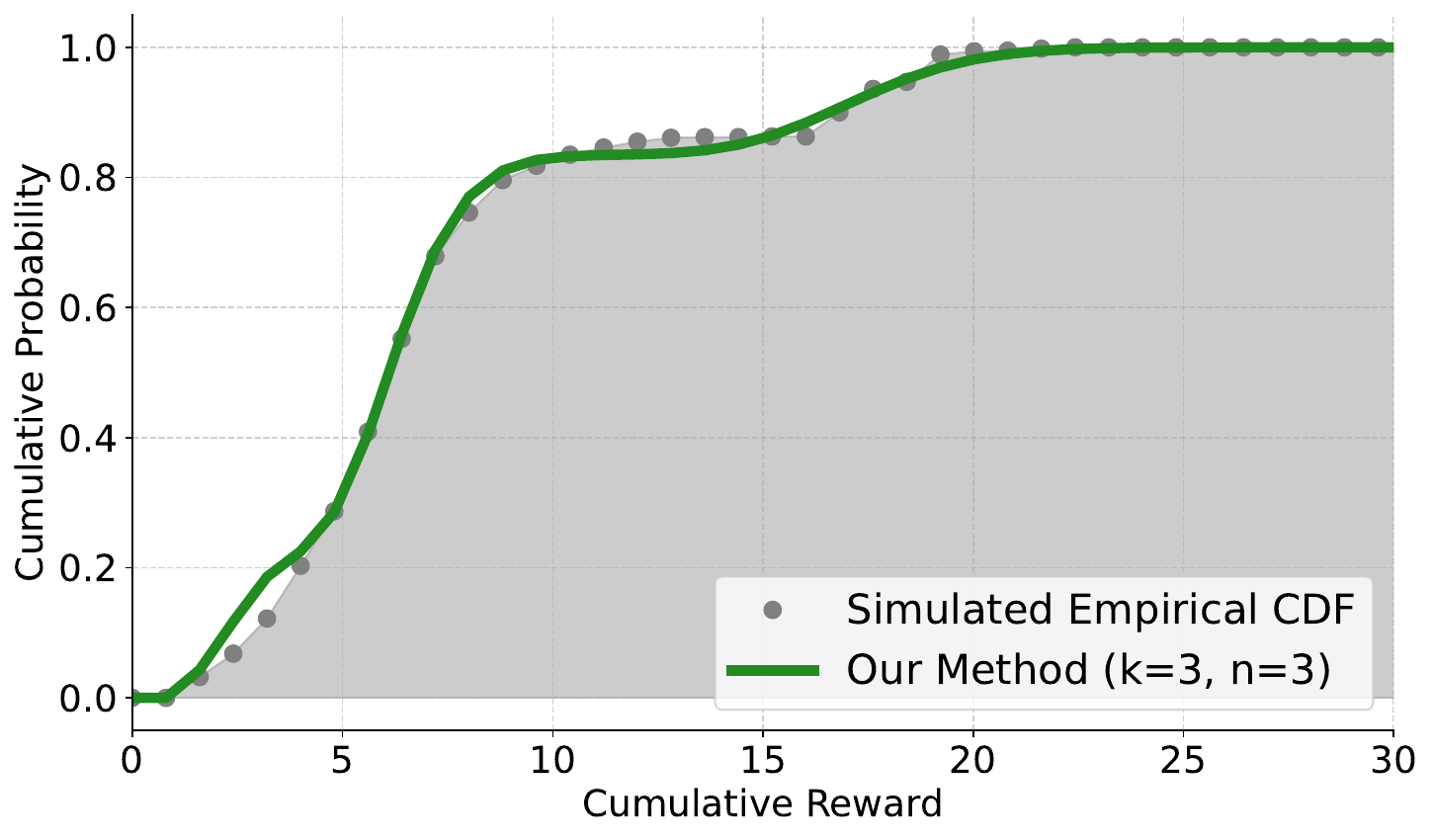}}
\subfigure[Future Investor (CDF)]{\includegraphics[width=0.24\textwidth]{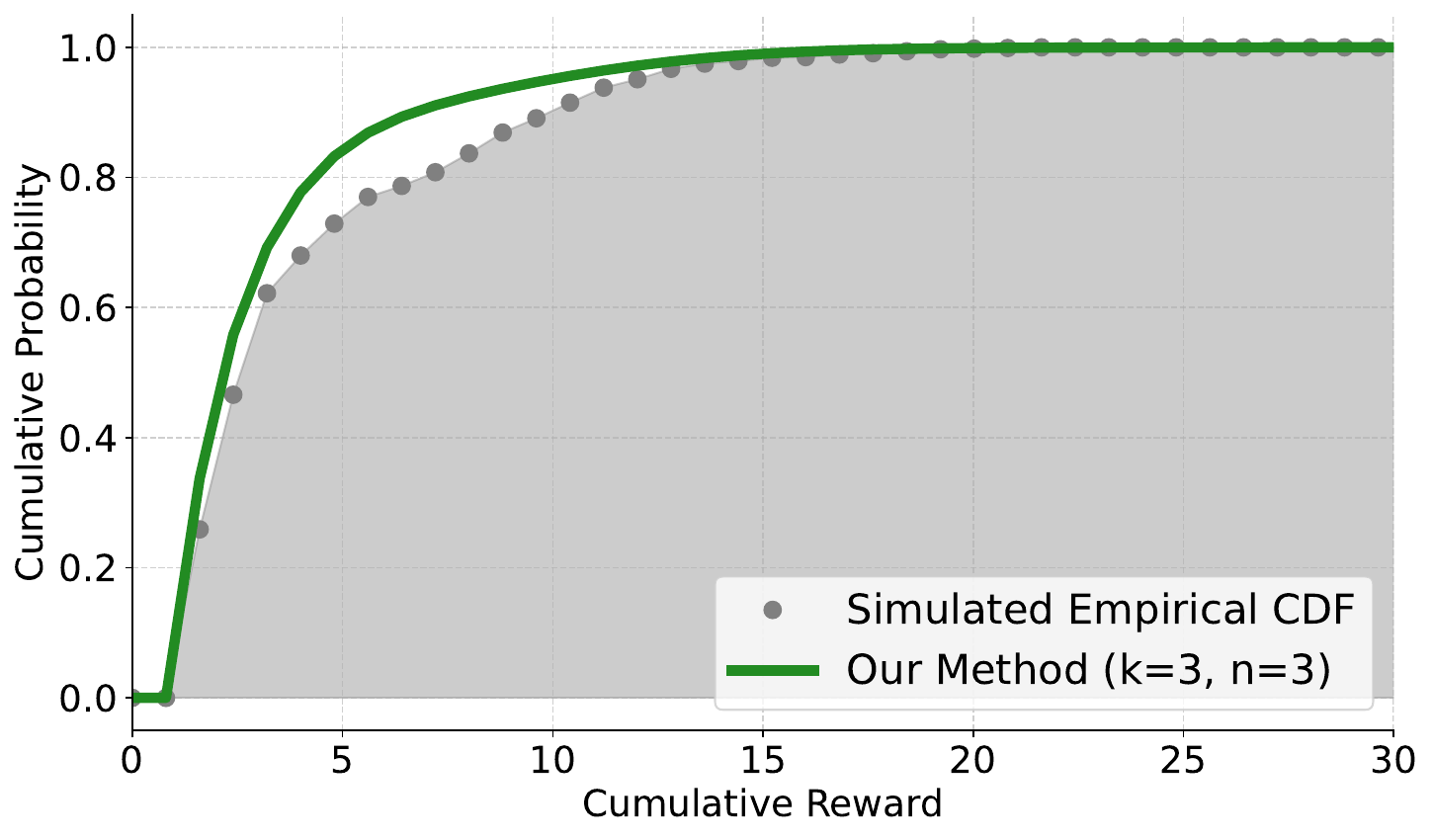}}

\caption{PDF and CDF for continuous reward space subjects}
\label{fig:continuous}
\end{figure}

\noindent\textbf{Discrete Reward Space.}
Figures~\ref{fig:pdfcdfdiscrete1} and~\ref{fig:pdfcdfdiscrete2} present an evaluation of our method on the discrete reward space subjects. The blue histograms represent the discrete approximation constructed by~\cite{elsayed2024distributional}, 
referred to as \emph{histogram method} in the figures. The gray histogram represents the empirical baseline, constructed from one million sample runs of the model.
The KS comparison measures are reported in Table~\ref{tab:benchmarks}.

Overall, the two methods show comparable performance, with our method achieving lower KS divergence for all subjects except \texttt{LeadSync}. In the \texttt{LeadSync} case, support of the probability distribution is in $(1, 10] \cap \mathbb{Z}$, allowing the histogram to fully capture the true distribution without loss of information. This result also shows that the accuracy of histogram-based methods is highly dependent on bin width, with finer binning improving precision but increasing the computational cost. 
In the \texttt{Herman} scenario, our method shows a closer alignment with the empirical distribution, as also captured by the lower KS metric (Table~\ref{tab:benchmarks}).
Figure~\ref{fig:pdfcdfdiscrete2} includes the approximations for the remaining discrete reward space subjects, where the probability density function (PDF) includes multiple modalities, and the achievable reward values are generally more scattered. In these cases, the Erlang-based approximation offers a smoother fit compared to histograms and outperforms~\cite{elsayed2024distributional}, especially in sparsely populated regions like \texttt{GridWorld Navigation}, as confirmed by the $d_{\texttt{KS}}$ metric. Although the performance of any histogram-based method can be improved with finer bins~\cite{bellemare2017distributional}, scalability and sparse-dense ranges issues remain problematic in continuous and larger finite state spaces.

\vspace{1mm}
\noindent\textbf{Continuous Reward Space.} Figure~\ref{fig:continuous} shows the evaluation of our method on the two scenarios with continuous reward space: \texttt{UAV Energy} and \texttt{Future Investor}. The empirical distribution is represented by the gray histogram. For easier comparison across the subjects, we used $K=3$ moments and $n=3$ components of the Erlang mixture. In the case of \texttt{Future Investor}, as noticeable by visual inspection, and reflected in a KS metric of 0.17, this $(K,n)$ settings tend to overapproximate the CDF around the head of the distribution, catching up after $x\geq 12$. In the next section, we will dive deeper into how such approximation can be improved by increasing $K$ and $n$.

\begin{figure*}[h]
\centering
\subfigure[PDF with $k=3$ and $n = 3, \dots, 9$]{\includegraphics[width=0.32\textwidth]{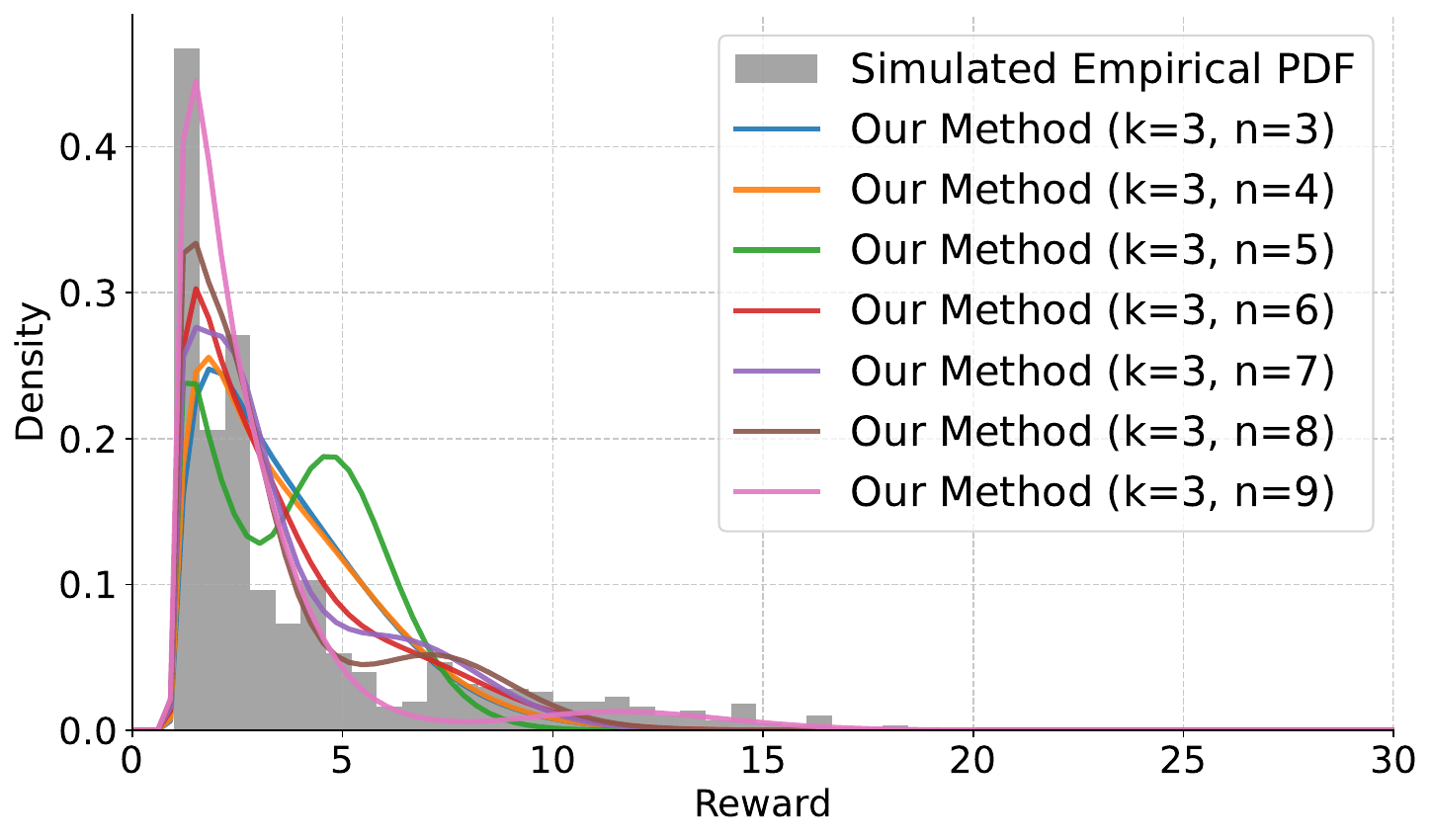}}
\subfigure[PDF with $k=4$ and $n = 3, \dots, 9$]{\includegraphics[width=0.32\textwidth]{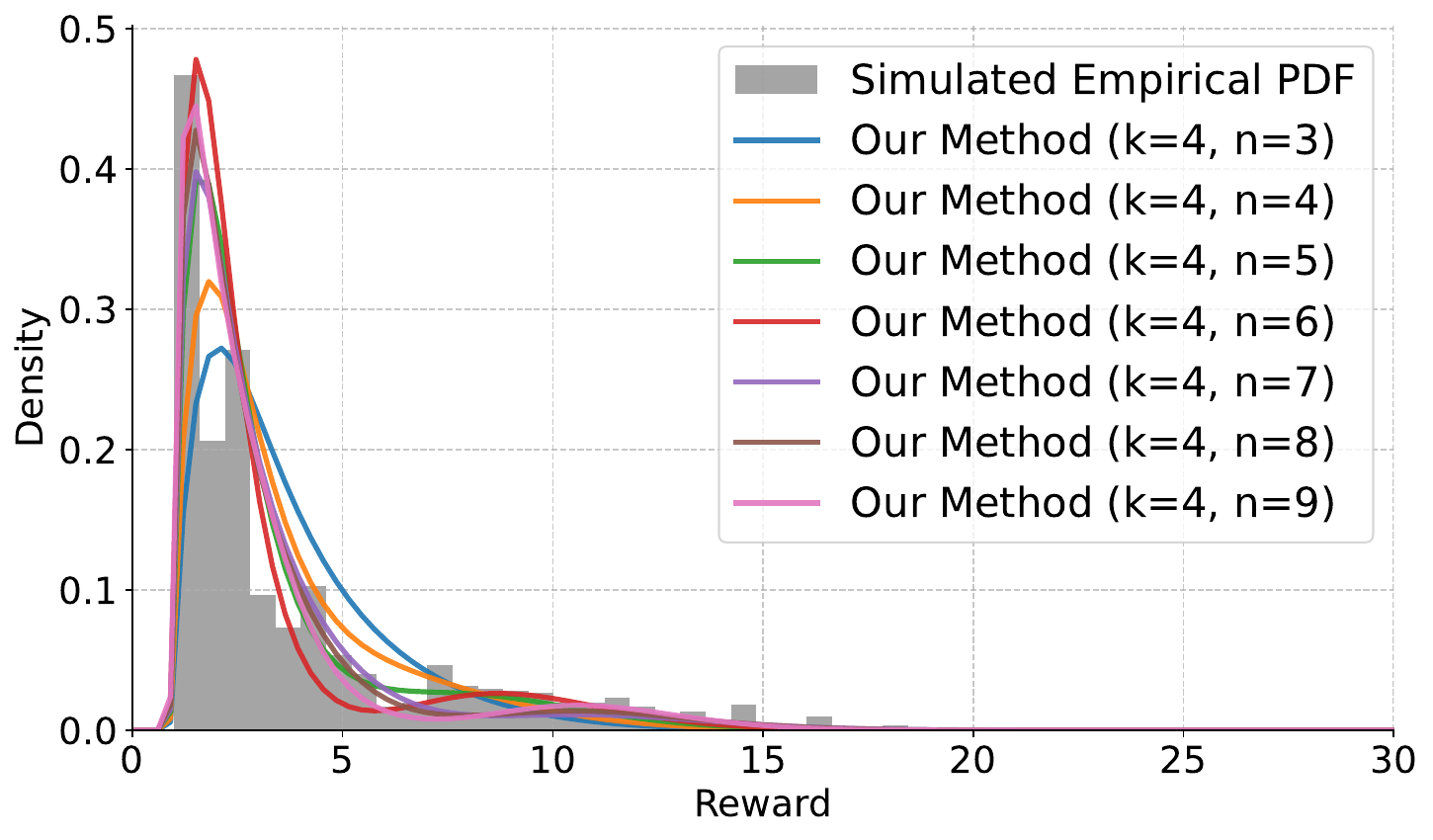}}
\subfigure[PDF with $k=5$ and $n = 3, \dots, 9$]{\includegraphics[width=0.32\textwidth]{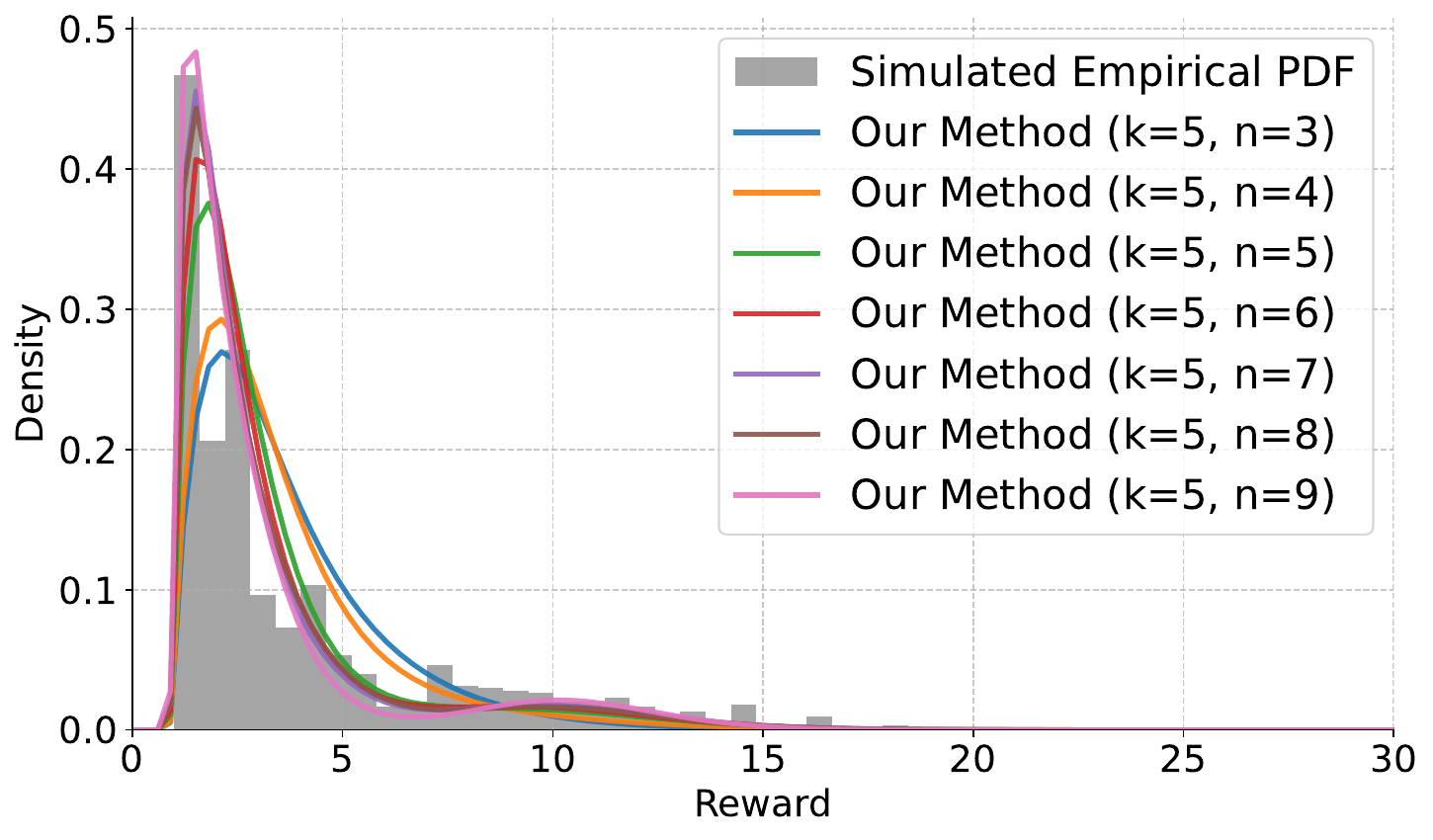}}\\
\subfigure[CDF with $k=3$ and $n = 3, \dots, 9$]{\includegraphics[width=0.32\textwidth]{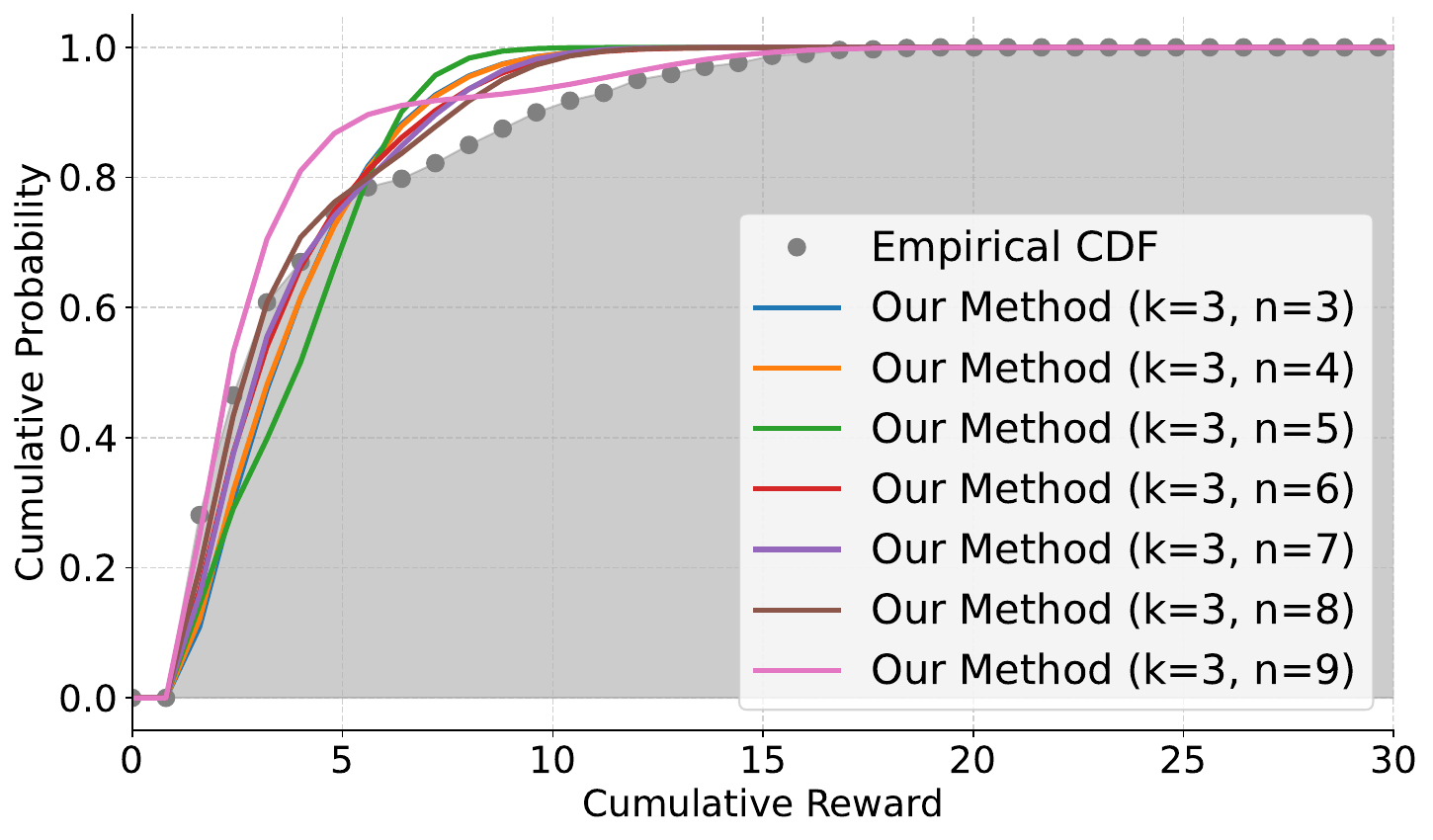}}
\subfigure[CDF with $k=4$ and $n = 3, \dots, 9$]{\includegraphics[width=0.32\textwidth]{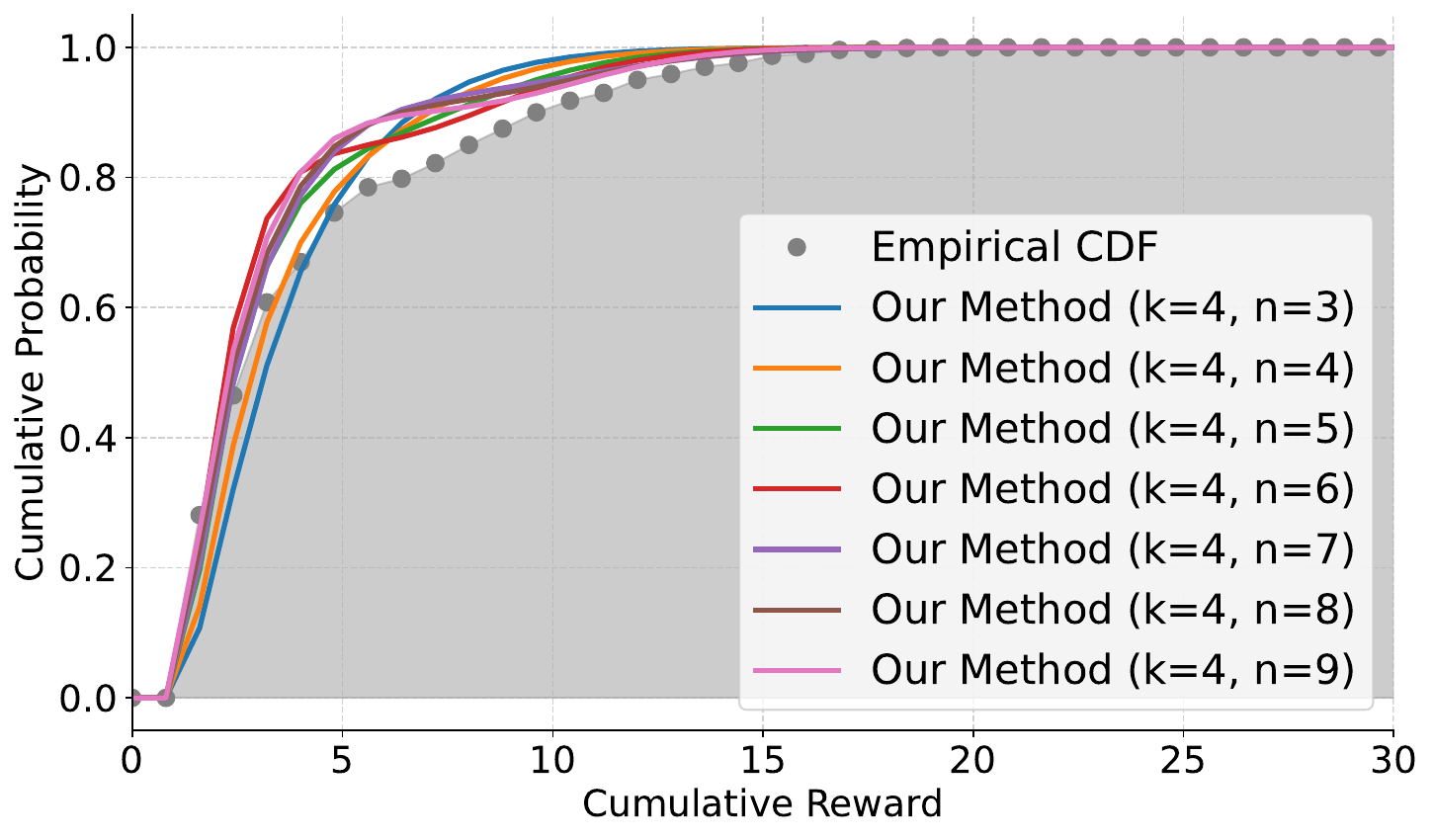}}
\subfigure[CDF with $k=5$ and $n = 3, \dots, 9$]{\includegraphics[width=0.32\textwidth]{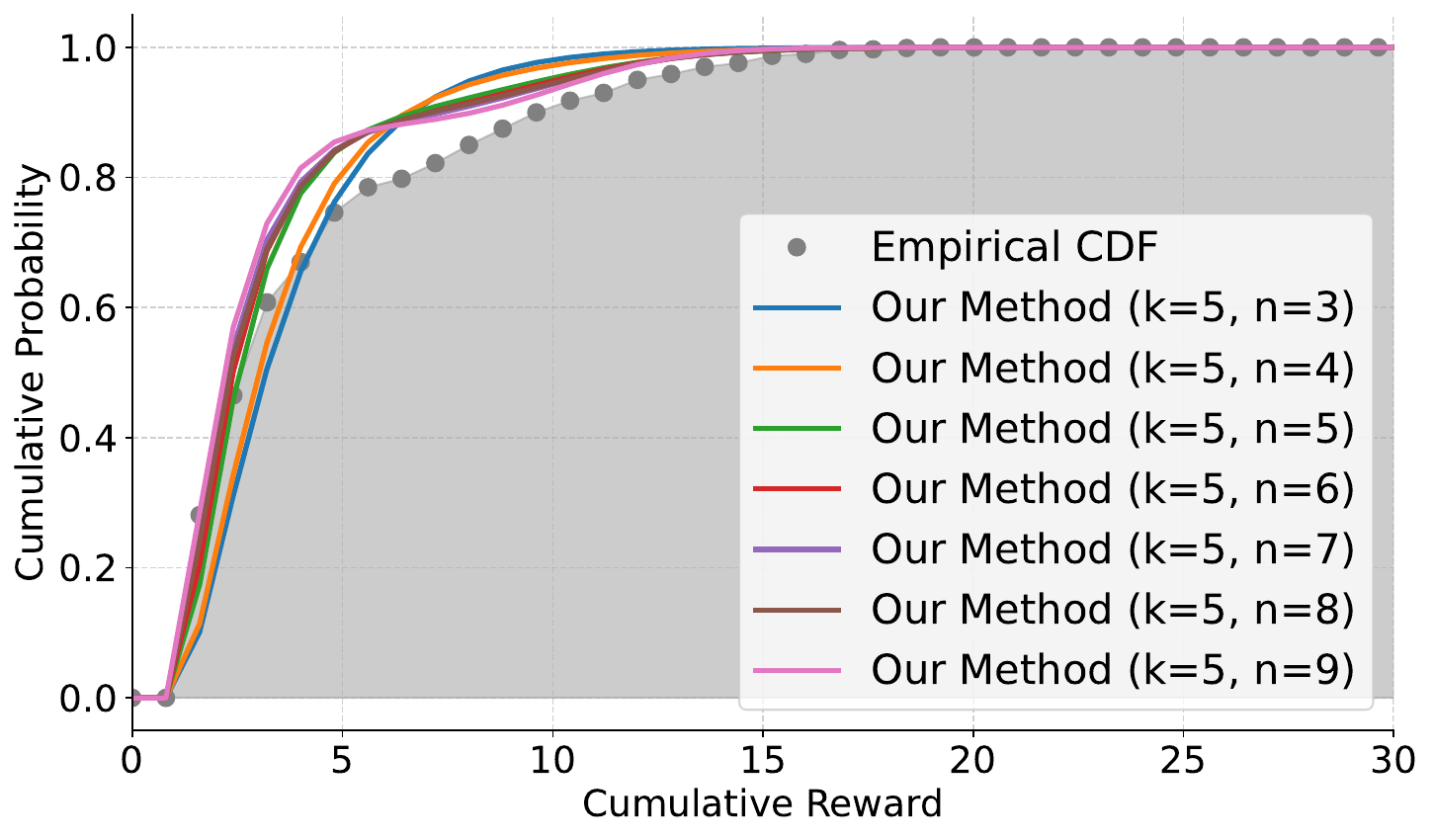}}
\caption{Approximated Probabilities Distribution of Financial Market with different number of mixtures $n$}
\label{fig:hyper}
\end{figure*}

\bigskip
\begin{table}[h]
    \centering
    \resizebox{0.43\textwidth}{!}{\begin{tabular}{lccccc}
    \toprule
    \textbf{$K$} & \textbf{$n$} & \textbf{$T_{\texttt{opt}}$ (s)} & \textbf{$T_{\texttt{total}}$ (s)} & \textbf{Iterations} & \textbf{$D_{\texttt{KS}}$} \\
    \midrule
    \multirow{7}{*}{3} & 3 & 0.17 & \textbf{1.42} & 36  & 0.17  \\
    & 4 & 0.19 & 1.44 & 43  & 0.17 \\
    & 5 & 0.31 & 1.56 & 57  & 0.26 \\
    & 6 & 0.23 & 1.48 & 57  & 0.16 \\
    & 7 & 0.44 & 1.69 & 63  & 0.15 \\
    & 8 & 0.62 & 1.87 & 80  & \textbf{0.14} \\
    & 9 & 1.65 & 2.91 & 92  & 0.21 \\
    \midrule
    \multirow{7}{*}{4} & 3 & 0.59 & 2.20 & 69  & 0.12 \\
    & 4 & 0.73 & 2.34 & 72  & 0.10 \\
    & 5 & 0.87 & 2.48 & 74  & 0.10 \\
    & 6 & 0.82 & 2.43 & 75  & \textbf{0.07} \\
    & 7 & 0.39 & \textbf{2.00} & 71  & 0.09 \\
    & 8 & 0.92 & 2.53 & 79  & 0.09 \\
    & 9 & 0.47 & 2.08 & 88  & 0.09 \\
    \midrule
    \multirow{7}{*}{5} & 3 & 2.14 & 4.24 & 92  & 0.17 \\
    & 4 & 1.36 & 3.46 & 115 & 0.16 \\
    & 5 & 0.80 & 2.90 & 108 & 0.13 \\
    & 6 & 0.21 & \textbf{2.31} & 101 & \textbf{0.07} \\
    & 7 & 0.74 & 2.84 & 94  & \textbf{0.07} \\
    & 8 & 0.83 & 2.93 & 84  & \textbf{0.07} \\
    & 9 & 2.30 & 4.40 & 131 & \textbf{0.07} \\
    \bottomrule
    \end{tabular}
    }
    \vspace{1mm}
    \caption{Computation times and $D_{\texttt{KS}}$ for different combinations of moments $K$ and mixture size $n$ on the Future Investor subject.}
    \label{table:computation_time}
\end{table}

\begin{table}[!htp]
    \centering
    \resizebox{0.32\textwidth}{!}{\begin{tabular}{@{}lllll@{}}
    \toprule
    \textbf{$a_i$} &\textbf{$T_{\texttt{opt}}$ (s)} & \textbf{$T_{\texttt{total}}$ (s)} & \textbf{$D_{\texttt{KS}}$} \\
    \midrule
    $i$ & 0.62 & 2.73 & 0.16 \\
    $2 \cdot i$ &1.95 & 4.05 & 0.15\\
    $3 \cdot i$ &0.74 & 2.84 & 0.13 \\
    $2^i$ & 1.45 & 3.55 & 0.12 \\
    $3^i$ & 0.21 & 2.31 & \textbf{0.07} \\
    $4^i$ & 1.37 & 3.47 & \textbf{0.07} \\
    \bottomrule
    \end{tabular}
    }
    \vspace{1mm}
    \caption{Comparison of different shape parameter sets $a_i$ (Future Investor subject, $k=5$, $n=6$)}
    \label{table:comparison}
\end{table}

\subsection{Hyper-parameters Effects on Accuracy and Runtime}\label{secHyperparameters}

\noindent\textbf{Number of Moments and Erlang Components}. 
Figure~\ref{fig:hyper} shows the plots of the approximate distribution constructed for the \texttt{Future Investor} subject computed with $K \in \{3, 4, 5\}$ and $n \in [3..9]$.
We selected \texttt{Future Investor} for this evaluation due to the presence of multiple, overlapping modalities in its PDF with a continuous reward space, which renders approximation particularly challenging.
The KS metrics in Table~\ref{table:computation_time} show that increasing the values of $K$ and $n$ can achieve higher accuracy -- the KS metric drops from 0.17 to 0.07. This is in line with the theoretical expectations from Section~\ref{secTheo}:
larger mixtures theoretically enhance approximation accuracy by allowing Erlang distributions to approximate any positive random variable, while more moments capture additional more features of the reward process, including multiple modalities and tail behavior. For example, $K=4$ and $n=6$, already achieve the best KS metric (0.07), which is also reported for $K=5$ and $n \in \{6, 7, 8, 9\}$.

However, numerical instability and limitations of the optimizer accuracy on this class of problems, means that, in practice, the quality of the (near-)optimal solutions found by the optimizer does not increase monotonically with $K$ and $n$. Although larger mixtures subsume smaller ones by setting some weights to zero, and generally achieve better performance due to increased expressiveness, the non-convexity of the problem means that the optimization is not guaranteed to converge to a single global optimum. In particular, larger $n$ in our experiments occasionally led to worse performance due to the accumulation of numerical errors across the larger number of free variables (e.g., see $K=4$ and $n \in \{7, 8, 9\}$, which performs marginally worse than with $n=6$). We plan to experiment with different optimization strategies in the future, including the use of more robust methods such as Expectation Maximization as in~\cite{Verbelen_Gong_Antonio_Badescu_Lin_2015} and studying different relaxation of the optimization problem to improve the stability of its numerical solution.

\vspace{1mm}
\noindent\textbf{Runtime.}  
Table~\ref{table:computation_time} summarizes the runtime statistics for each of the $(K,n)$ settings for \texttt{Future Investor} (2,899 states and 7,967 transitions). The total time ($T_{\texttt{total}}$, in seconds) shows the end-to-end time to construct the approximate distribution, which includes the computation of the moments of the reward distribution and the optimization process to infer the Erlang mixture. The optimization time ($T_{\texttt{opt}}$, in seconds) indicates the total time spent on optimizer's calls. The column \texttt{Iterations} shows the number of iterations of the optimization loop required to converge. While the number of iterations seems to weakly correlate with the value of $K$, the time for all the experiments remained consistently below 5 seconds on a Macbook Air M1 (8Gb of memory). Notably, both the evaluation of Cantelli's inequality for early termination (Eq.~\eqref{eqCantelliHigherMoments}) and of the CDF of the Erlang mixture (Eq.~\eqref{eqMixtureCDF}) only require negligible time to evaluate closed-form expressions.

\vspace{1mm}
\noindent\textbf{Heuristic Restriction of Shape Values.} In line with previous literature~\cite{Verbelen_Gong_Antonio_Badescu_Lin_2015,Lee_Lin_2012,verbelen2013phase}, we restricted the integer shape parameter values within a fixed set, in order to simplify the optimization problem (cf. Section~\ref{secDistributionApproximation}). Table~\ref{table:comparison} evaluates different heuristics on \texttt{Future Investor} with $K=5$ and $n=6$, which yielded an optimal KS metric for the subject: $a_i = i$ to the original assignment in~\cite{schweitzer1996stochastic}, which converges to the optimal approximation only for large mixtures; $a_i = 2 \cdot i$ and $a_i = 3 \cdot i$ use a linear spreading of shape values~\cite{verbelen2013phase}; finally, $a_i=2^i$, $a_i = 3^i$ and $a_i = 4^i$ use the exponential spreading of our heuristics. The intuition behind spreading the shape parameter values is to allow capturing distribution features that may be different from the head to the tail, especially in the presence of multiple modalities. The exponential spreading heuristic consistently achieved the best fitting. The heuristic $a_i=3^i$ consistently achieved the best performance across the board of our diverse subjects. Expectation Maximization has been used in literature to further refine the shape values around their initial heuristics points~\cite{verbelen2013phase}; we plan to experiment with such finer tuning in future work.

 \section{Related Work}\label{secRelated}
\noindent \textbf{Moment Matching Approximation.} Moment matching is a classical method for approximating density functions~\cite{john2007techniques, mnatsakanov2009recovery}. The main challenge of density approximation methods is to select an appropriate foundational distribution to represent the unknown distribution. Existing methods have focused on using powerful kernel density function~\cite{gavriliadis2012truncated} or Erlang mixtures~\cite{johnson1989matching, he2022continuous}, given their capability in recovering a wide variety of distributions. However, the accuracy of these approximations depends heavily on how well the selected function or kernel aligns with the underlying distribution. We utilize the Erlang mixtures, not only due to their flexibility but also because of its inherent connection to Markov models, which allows us to approximate the reward distribution with bounded error.

\vspace{1mm}
\noindent \textbf{Robust Model Checking.} Robustness is one of the most important concerns in model checking methods. Some simulation-based approaches provide probabilistic bounds with confidence guarantees of model correctness~\cite{herault2004approximate, pappagallo2020monte}. There are also several methods that provide evaluation beyond expected value with distributional information, including the quantile-based method~\cite{baier2014probabilistic} and histogram approximation method~\cite{elsayed2024distributional}, but could lead to unbounded information loss due to the coarse approximation. In contrast, our approach provides smoother approximation while avoiding discretization errors and offering guaranteed accuracy in both discrete and continuous domains.

\vspace{1mm}
\noindent \textbf{Distributional Reinforcement Learning.} Recent work in distributional reinforcement learning has focused on estimating the full return distribution instead of just its expectation. a categorical histogram approximation method has been prosed in~\cite{bellemare2017distributional} to compute return distributions using discrete bins, optimizing the Wasserstein distance between estimated and target distributions. Extensions of this work, such as quantile regression (QR-DQN)~\cite{dabney2018distributional} and implicit quantile networks (IQN)~\cite{dabney2018implicit}, improve on this by learning the quantile-based approximation, offering greater flexibility and risk-sensitive policies. Our method can be further extended to the RL domain, providing precise estimation and flexibility in continuous reward environments than histogram methods, with a robust verification that can be used as guidance for distributional policy iteration.

 \section{Conclusion}\label{secConclusion}
In this work, we addressed the limitations of current PMC methods that rely on expected values, by proposing a moment-matching approach using mixtures of Erlang distributions. This approach enables a more precise approximation of cumulative reward distributions in DTMCs by capturing multi-modality and higher-order statistical features. We enhance system verification by incorporating the full distribution of outcomes, enabling better decision support for adaptive systems.
Next, we aim to extend our approach to robust model checking and learning in MDPs, where distributional robustness could guide policy search. We plan to leverage the prior knowledge of the accumulated reward distribution in MDPs, as demonstrated in this work, to provide a robust alternative to histogram-based approximation, which may struggle in distributional RL scenarios involving sparse or dense reward regions and continuous reward space. We also aim to explore the applicability of this approach in distributional learning settings beyond distributional model checking, to enhance policy iteration process and evaluate its potential for synthesizing more robust policies under uncertainty in practical applications.

\cleardoublepage
\bibliographystyle{acm}
\bibliography{bib}

\end{document}